\definecolor{tumblue}{RGB}{0, 101, 189}
\algrenewcommand\algorithmicrequire{\textbf{Input:}}
\algrenewcommand\algorithmicensure{\textbf{Output:}}
\algnewcommand\Parameters{\item[\textbf{Parameters:}]}
\newtheorem{theorem}{Theorem}
\def\vzero{{\bm{0}}}
\def\vone{{\bm{1}}}
\def\vtheta{{\bm{\theta}}}
\def\vb{{\bm{b}}}
\def\vf{{\bm{f}}}
\def\vh{{\bm{h}}}
\def\vu{{\bm{u}}}
\def\vw{{\bm{w}}}
\def\vx{{\bm{x}}}
\def\vy{{\bm{y}}}
\def\vz{{\bm{z}}}
\def\mW{{\bm{W}}}
\DeclareMathAlphabet{\mathsfit}{\encodingdefault}{\sfdefault}{m}{sl}
\SetMathAlphabet{\mathsfit}{bold}{\encodingdefault}{\sfdefault}{bx}{n}
\def\sA{{\mathcal{A}}}
\def\sB{{\mathcal{B}}}
\def\sC{{\mathcal{C}}}
\def\sD{{\mathcal{D}}}
\def\sE{{\mathcal{E}}}
\def\sN{{\mathcal{N}}}
\def\sV{{\mathcal{V}}}
\def\1{{\mathds{1}}}
\newcommand{\R}{\mathbb{R}}
\DeclareMathOperator*{\argmin}{arg\,min}
\newcommand{\norm}[1]{\left\Vert #1 \right\Vert}
\newcommand{\graph}{\operatorname{Graph}}
\newcommand{\proj}{\mathrm{P}}
\newcommand{\reflect}{\mathrm{R}}
\newcommand{\prox}{\operatorname{prox}}
\begin{document}

\twocolumn[

\aistatstitle{A Projection-Based Framework for Gradient-Free and Parallel Learning}

\aistatsauthor{
  Andreas Bergmeister
  \And Manish Krishan Lal
  \And Stefanie Jegelka
  \And Suvrit Sra
}

\aistatsaddress{
  TU Munich, MCML
  \And TU Munich, MCML
  \And TU Munich, MCML\\MIT CSAIL
  \And TU Munich, MCML\\MIT LIDS
} ]

\begin{abstract}
We present a feasibility-seeking approach to neural network training. This mathematical optimization framework is distinct from conventional gradient-based loss minimization and uses projection operators and iterative projection algorithms. We reformulate training as a large-scale feasibility problem: finding network parameters and states that satisfy local constraints derived from its elementary operations. Training then involves projecting onto these constraints, a local operation that can be parallelized across the network. We introduce PJAX, a JAX-based software framework that enables this paradigm. PJAX composes projection operators for elementary operations, automatically deriving the solution operators for the feasibility problems (akin to autodiff for derivatives). It inherently supports GPU/TPU acceleration, provides a familiar NumPy-like API, and is extensible. We train diverse architectures (MLPs, CNNs, RNNs) on standard benchmarks using PJAX, demonstrating its functionality and generality. Our results show that this approach is a compelling alternative to gradient-based training, with clear advantages in parallelism and the ability to handle non-differentiable operations.
\end{abstract}

\section{INTRODUCTION}
\label{sec:introduction}
\begin{figure*}[!t]
    \centering
    \includegraphics[width=0.9\textwidth]{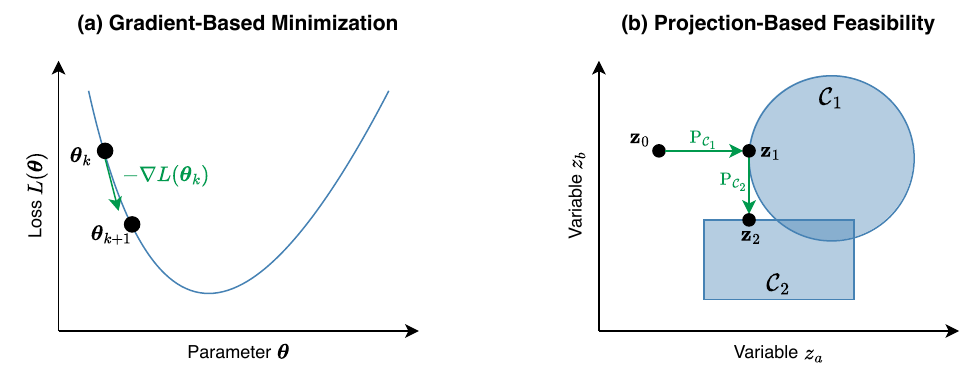}
    \caption{Neural network training paradigm shift. (a) Gradient-based methods iteratively minimize a loss function $L(\vtheta)$ using local gradients. (b) Our projection-based feasibility approach finds a point $\vz$ in the intersection of constraint sets (e.g., $\sC_1, \sC_2$) via iterative projections onto these sets.}
    \label{fig:paradigm_shift}
\end{figure*}

Deep learning models have achieved remarkable success across diverse applications, largely driven by the effectiveness of gradient-based optimization. The backpropagation algorithm~\citep{rumelhart1986learning}, paired with stochastic gradient descent (SGD) and its adaptive variants~\citep{duchi2011adaptive, tieleman2012lecture, kingma2014adam}, forms the bedrock of neural network training by efficiently computing loss gradients to iteratively adjust network parameters. Despite their undeniable success, gradient-based methods have several limitations. They often converge to local minima or high-error saddles~\citep{dauphin2014identifying, choromanska2015loss}, can suffer from vanishing or exploding gradients in deep architectures~\citep{hochreiter1991untersuchungen, bengio1994learning}, and fundamentally require network components and loss functions to be (sub)differentiable. Moreover, the sequential nature of backpropagation limits parallelism and prolongs update latency. Finally, the global error backpropagation mechanism, requiring symmetric feedback pathways, is widely considered biologically implausible~\citep{crick1989recent, lillicrap2016random}. These challenges, along with inspiration from neuroscience and alternative optimization paradigms, motivate the search for fundamentally different approaches to training neural networks.

The research landscape includes various alternatives to end-to-end backpropagation. Zeroth-order methods such as Evolution Strategies \citep{salimans2017evolution} or Genetic Algorithms \citep{holland1992adaptation} optimize the global loss using only function evaluations, at the cost of high sample complexity and poor scalability. Biologically inspired approaches use local rules but face distinct challenges: Hebbian methods \citep{Hebb1949} lack supervised error integration, while spiking neural networks \citep{song2000competitive, gerstner2002spiking} contend with non-differentiable dynamics and credit assignment difficulties. Gradient approximation methods like Target Propagation \citep{lee2015difference}, Feedback Alignment \citep{lillicrap2016random} attempt to alleviate some backpropagation issues but often trade off convergence speed and performance. The recent \emph{Forward–Forward} algorithm~\citep{hinton2022forward} eliminates the backward pass altogether by training each layer on a local contrastive ``goodness'' objective; though its effectiveness on large-scale benchmarks has yet to be demonstrated.

This paper investigates a paradigm shift: reformulating neural network training from a loss minimization task (\cref{fig:paradigm_shift}a) into a large-scale \emph{feasibility problem} (\cref{fig:paradigm_shift}b). Instead of navigating a complex loss landscape, we seek to identify network parameters and intermediate states that simultaneously satisfy a collection of local constraints derived from the network's structure and the desired input-output mappings. Our approach achieves this through a fine-grained decomposition of the network into its elementary operations—termed \emph{primitive functions} (e.g., inner products, pointwise activations). For these primitive functions, orthogonal projections onto their graphs (set of valid input–output pairs) are often computationally inexpensive. The availability of such efficient projections allows us to recast training as the problem of finding a point in the intersection of numerous local constraint sets, a task well-suited for iterative projection algorithms rooted in convex optimization~\citep{bauschke2011convex}. This feasibility perspective builds upon conceptual work by~\citet{elser2021learning}.

This feasibility-driven paradigm offers several inherent advantages over traditional gradient-based methods. First, training relies on the availability of projection operators for primitive functions rather than on their differentiability, naturally accommodating non-differentiable components within network architectures. Second, the updates are local, modifying only adjacent variables in the computation graph. This eliminates the need for global error backpropagation and its associated weight transport problem, aligning more closely with notions of biological plausibility. Crucially, this locality, particularly when coupled with a bipartite structuring of the computation graph (\cref{sec:method}), enables fully parallelizable updates across the network. While other strategies also decompose the global training objective into simpler, coupled sub-problems to enable parallel execution, such as layer-wise ADMM~\citep{glowinski1975approximation, boyd2011distributed} or the Method of Auxiliary Coordinates (MAC)~\citep{carreira2014distributed}, they often face computational challenges such as large matrix inversions or intricate dual-variable bookkeeping~\citep{taylor2016training}. In contrast, our fine-grained decomposition requires only a collection of efficiently computable local projections.

This paper delivers the rigorous formulation, a robust software framework, and systematic empirical validation necessary to establish projection-based training as a concrete, implementable, and explorable alternative. Our specific contributions are:
\begin{enumerate}
\setlength{\itemsep}{1pt}
    \item A detailed graph-structured \emph{feasibility formulation} using edge variables. This formulation allows us to solve the problem efficiently with parallelizable projection algorithms (e.g., Alternating Projections and Douglas-Rachford) by exploiting a bipartitioning of the computation graph (\cref{sec:method}).
    \item A set of projection operators for key primitive functions (e.g., dot product, ReLU of sum, max pooling) that are fundamental building blocks for neural networks. The mathematical derivations for these operators are provided in \cref{sec:primitive_projections}.
    \item \textbf{PJAX} (Projection JAX)\footnote{PJAX is available at \url{https://github.com/AndreasBergmeister/pjax}}, a complete numerical framework built upon JAX~\citep{jax2018}. Designed for this compositional projection-based paradigm, it serves a role analogous to automatic differentiation systems for gradient-based methods. PJAX inherits JAX's GPU/TPU acceleration and JIT compilation capabilities, provides a familiar API mirroring NumPy/JAX, automatically orchestrates the iterative solution of user-defined feasibility problems, and supports extension with new primitive functions and projection operators.
    \item Extensive empirical validation across diverse neural network architectures (MLPs, CNNs, RNNs) on standard benchmarks (\cref{sec:experiments}). These experiments demonstrate the viability of our projection-based approach and provide an initial characterization of its performance.
\end{enumerate}

\section{BACKGROUND AND PRELIMINARIES}
\label{sec:background}
We work in finite-dimensional Euclidean spaces $\R^d$. The inner product is $\langle \vx, \vy \rangle = \vx^\top \vy$, inducing the Euclidean norm $\|\vx\| = \sqrt{\langle \vx, \vx \rangle}$, which in a product space is $\|(\vx_1, \ldots, \vx_n)\|^2 = \sum_{i=1}^n \|\vx_i\|^2$.

\subsection{Projection and Proximal Operators}
\label{sec:proj-prox}
The concept of projection onto sets is central to our method. Given a non-empty closed set $\sC \subseteq \R^d$, the \emph{projection} of $\vx \in \R^d$ onto $\sC$ is
\begin{equation}
    \label{eq:proj}
    \proj_{\sC}(\vx) = \argmin_{\vy \in \sC} \|\vx - \vy\|^2.
\end{equation}
This projection always exists. If $\sC$ is convex, $\proj_{\sC}(\vx)$ is unique; the operator $\proj_{\sC}$ is non-expansive, and its fixed points constitute $\sC$. If $\sC$ is non-convex, the minimizer may not be unique, rendering $\proj_{\sC}$ set-valued; $\proj_{\sC}(\vx)$ then denotes an arbitrary choice from the set of minimizers.

Projections onto product sets $\sC = \sC_1 \times \dots \times \sC_m$ (where each $\sC_i \subseteq \R^{d_i}$ is non-empty and closed) separate as follows: for $(\vx_1, \ldots, \vx_m) \in \R^{d_1} \times \dots \times \R^{d_m}$,
\begin{equation}
    \label{eq:separable_projections}
    \proj_{\sC_1 \times \dots \times \sC_m}(\vx_1, \ldots, \vx_m) = (\proj_{\sC_1}(\vx_1), \ldots, \proj_{\sC_m}(\vx_m)).
\end{equation}

The \emph{proximal operator} is associated with functions. For a proper, lower semi-continuous function $f: \R^d \to (-\infty, +\infty]$ and $\lambda > 0$, the proximal operator of $f$ at $\vx_0 \in \R^d$ is
\begin{equation}
    \label{eq:prox}
    \prox_{\lambda f}(\vx_0) = \argmin_{\vx \in \R^d} \left( f(\vx) + \frac{1}{2\lambda} \|\vx - \vx_0\|^2 \right).
\end{equation}
If $f$ is convex, this minimizer is unique~\citep{moreau1965proximite}. A point $\vx$ is a minimizer of $f$ if and only if it is a fixed point of the proximal operator, $\prox_{\lambda f}(\vx) = \vx$. The proximal operator generalizes the projection operator: if $f$ is the indicator function of a closed convex set $\sC$, then $\prox_{\lambda f}(\vx_0) = \proj_{\sC}(\vx_0)$.

\subsection{Feasibility Problems and Projection Algorithms}
\label{sec:feasibility}
Many problems involve finding a point in the intersection of multiple constraint sets. Given closed sets $\sC_1, \ldots, \sC_N \subseteq \R^d$, the \emph{feasibility problem} seeks $\vx \in \R^d$ such that
\begin{equation}
    \label{eq:multiset_feasibility}
    \vx \in \bigcap_{i=1}^N \sC_i,
\end{equation}
assuming a non-empty intersection. Iterative projection algorithms are apt for such problems, particularly when individual projections $\proj_{\sC_i}$ are computationally simpler than directly finding a point in the intersection.

Classical algorithms include \textbf{Alternating Projections (AP)} for two sets $\sC_1, \sC_2$, with the sequence:
\begin{equation} \label{eq:ap_iter}
    \vx_{k+1} = \proj_{\sC_1}(\proj_{\sC_2}(\vx_k)).
\end{equation}
\textbf{Cyclic Projections (CP)} extends this to $N > 2$ sets:
\begin{equation} \label{eq:cp_iter}
    \vx_{k+1} = \proj_{\sC_N}(\proj_{\sC_{N-1}}(\dots \proj_{\sC_1}(\vx_k))).
\end{equation}
\textbf{Douglas-Rachford (DR)} for two sets $\sC_1, \sC_2$ uses reflections $\reflect_{\sC}(\vx) = 2 \proj_{\sC}(\vx) - \vx$:
\begin{equation} \label{eq:dr_iter}
    \vx_{k+1} = \tfrac{1}{2} \left( \vx_k + \reflect_{\sC_1}(\reflect_{\sC_2}(\vx_k)) \right).
\end{equation}
When the sets $\sC_i$ are convex with a non-empty intersection, AP~\citep{bregman1965method} and CP~\citep{gubin1967method} converge to a point in this intersection. For DR, under similar conditions, the sequence of projections (e.g., $\{\proj_{\sC_2}(\vx_k)\}$) converges to such a point~\citep{douglas1956numerical, lions1979splitting}. These algorithms form the basis for solving feasibility problems in our work.

\section{METHOD}
\label{sec:method}

\begin{figure*}[!t]
    \centering
    \includegraphics[width=0.9\textwidth]{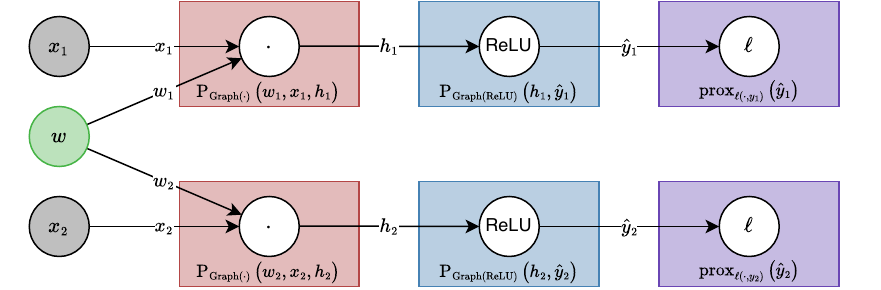}
    \caption{Computation graph for $\ell(\operatorname{ReLU}(w \cdot x_i), y_i)$ on two samples, showing projection operators for hidden function and loss nodes.}
    \label{fig:dag}
\end{figure*}

Consider a supervised learning setting with dataset $\sD = \{(\vx_i, \vy_i)\}_{i=1}^N$, where $\vx_i \in \R^{d_\text{in}}$ and $\vy_i \in \R^{d_\text{target}}$. Let $\vf: \R^{d_\text{in}} \times \R^{d_\theta} \to \R^{d_\text{out}}$ be a parametric function (e.g., a neural network), and let $\ell: \R^{d_\text{out}} \times \R^{d_\text{target}} \to \R$ be a loss function. The conventional approach minimizes the empirical risk
\begin{equation} \label{eq:erm}
    \min_{\vtheta \in \R^{d_\theta}} \frac{1}{N} \sum_{i=1}^N \ell\bigl(\vf(\vx_i, \vtheta), \vy_i\bigr).
\end{equation}
We reformulate training as a \emph{feasibility problem} by translating the per-sample objective of minimizing the loss into a set of hard constraints. The architecture of $\vf$, viewed as a composition of elementary operations, provides further constraints: each operation's output must align with its inputs according to its defining function. Training then becomes the search for a state (network parameters and internal activations) that simultaneously satisfies this entire collection of local constraints. We formally construct this feasibility problem by defining these constraints over variables within a computation graph.

\subsection{Constraint formulation via computation graph}
\label{sec:computation_graph}

For a given input $\vx$, parameters $\vtheta$, and target $\vy$, we represent the computation of $\ell(\vf(\vx, \vtheta), \vy)$ as a directed acyclic graph (DAG) $G = (\sV, \sE)$, the \emph{computation graph}. Nodes represent constant inputs, parameters, elementary operations of $\vf$ (primitive scalar functions), and the loss function. Edges represent data flow: an edge $(u, v) \in \sE$ indicates that the value from node $u$ serves as an input to the operation at node $v$. We denote the set of parent nodes of $v$ as $\sN^-(v)$ and its children as $\sN^+(v)$.
Crucially, we decompose $\vf$ into its scalar component functions (e.g., addition, inner products, ReLU activations); the nodes in the graph that correspond to operations therefore represent these scalar functions. Consequently, only scalar values pass along the edges. This granularity allows us to derive tractable projection operators onto the constraint sets associated with these primitive functions.

We introduce a variable $z_{uv} \in \R$ for each edge $(u,v) \in \sE$, representing the value carried along that edge. The state vector $\vz = (z_{uv})_{(u,v)\in\sE}\in\R^{\sE}$ collects all edge variables. For each node in $\sV$, we define a constraint set in $\R^{\sE}$. For a single sample $(\vx, \vy)$ (where $\vx=(x_1, \dots, x_{d_{\text{in}}})$ and $\vy=(y_1, \dots, y_{d_{\text{target}}})$), we categorize these constraints by node type as follows:

\textbf{Constant input nodes} $c_{j} \in \sV$ correspond to each input component $x_j$ for $j \in [d_\text{in}]$. The constraints
\begin{equation*}
    \sC_{c_j} = \{ \vz \in \R^{\sE} \mid \forall w \in \sN^+(c_j): z_{c_j w} = x_j \}
\end{equation*}
enforce that edges outgoing from the respective node carry its value. The projection operator $\proj_{\sC_{c_j}}(\vz)$ sets $z_{c_j w}$ to $x_j$.

\textbf{Parameter nodes} $p_{k} \in \sV$ correspond to each parameter component $\theta_k$ for $k \in [d_\theta]$. The constraints
\begin{equation*}
    \sC_{p_k} = \{ \vz \in \R^{\sE} \mid \forall u, w \in \sN^+(p_k): z_{p_k w} = z_{p_k u} \}
\end{equation*}
enforce consensus among all outgoing edge values from a parameter node. The projection operator $\proj_{\sC_{p_k}}(\vz)$ sets the values of outgoing edges $z_{p_k w}$ to the average of the current values $\{ z_{p_k w} \mid w \in \sN^+(p_k) \}$.

\textbf{Target node} $t \in \sV$ receives the network's output and ensures minimal loss with respect to the target $\vy$. Depending on the loss function $\ell$, we either project onto the constraint set
\begin{equation*}
    \sC_{t} = \{ \vz \in \R^{\sE} \mid (z_{u t})_{u \in \sN^-(t)} \in \argmin_{\vy'} \ell(\vy', \vy) \}
\end{equation*}
or apply the proximal operator of the loss function (with $\lambda > 0$) to the current predicted outputs
\begin{equation*}
    (z'_{u t})_{u \in \sN^-(t)} \leftarrow \prox_{\lambda \ell(\cdot, \vy)}\bigl((z_{u t})_{u \in \sN^-(t)}\bigr).
\end{equation*}
See \cref{sec:output_operators} for details. 
Note that a fixed point of the proximal operator is a minimizer of the loss function (see \cref{sec:proj-prox}), so it satisfies the constraint $\sC_t$.

\textbf{Hidden function nodes} $h \in \sV$ represent the application of a primitive function $f_h$ to their inputs. The constraints
\begin{equation*}
    \begin{aligned}
    \sC_h = \{ \vz \in \R^{\sE} \mid
    &\ \forall w \in \sN^+(h): \\
    &\ z_{h w} = f_h \bigl( (z_{u h})_{u \in \sN^-(h)} \bigr) \}
    \end{aligned}
\end{equation*}
enforce that all outgoing edge values equal the result of applying $f_h$ to its inputs. Projecting onto $\sC_h$ involves:
(1) computing an average of current values on outgoing edges, $\bar{z} = (\sum_{w \in \sN^+(h)} z_{h w}) / |\sN^+(h)|$;
(2) projecting the incoming edge values $(z_{u' h})_{u' \in \sN^-(h)}$ and the average $\bar{z}$ onto the graph of $f_h$: $(z'_{u' h}, z'_{h w}) = \proj_{\graph(f_h)}((z_{u' h})_{u' \in \sN^-(h)}, \bar{z})$;
(3) set incoming edge values $z_{u' h} \leftarrow z'_{u' h}$ and outgoing edge values $z_{h w} \leftarrow z'_{h w}$ for all $w \in \sN^+(h)$.
\Cref{thm:consensus-constrained-projection} formally justifies these steps.

The overall feasibility problem involves finding a state vector $\vz$ that lies in the intersection of all such individual node constraints
\begin{equation} \label{eq:feasibility_intersection}
    \text{Find } \vz \in \bigcap_{v \in \sV} \sC_v.
\end{equation}

The formulation extends to batches of samples by constructing a single, larger computation graph with $N$ instances of data-dependent components (input nodes, function nodes, and loss nodes for each sample $i$), while sharing parameter nodes across instances. Conceptually, we construct a computation graph for the function 
\begin{align}
    \label{eq:batch_loss}
    & ((\vx_1, \vy_1), \ldots, (\vx_N, \vy_N), \vtheta) \mapsto \nonumber\\ &\quad (\ell(\vf(\vx_1, \vtheta), \vy_1), \ldots, \ell(\vf(\vx_N, \vtheta), \vy_N)),
\end{align}
as \cref{fig:dag} illustrates.
The following theorem justifies the correctness of our formulation, showing that if a solution to the feasibility problem exists, it yields a set of parameters that minimizes the empirical risk in \cref{eq:erm}.

\begin{theorem}[Optimality of feasibility solution]
    \label{thm:optimality_feasibility}
    Let $\vz^* \in \R^{\sE}$ be a solution to the feasibility problem in \cref{eq:feasibility_intersection} for the function $\ell (\vf(\vx, \vtheta), \vy)$ applied to a dataset $\sD$ as described above.
    Then the consensus values of the outgoing edges from the parameter nodes $p_k$ $(k \in [d_\theta])$ in $\vz^*$ minimize the empirical risk in \cref{eq:erm}.
\end{theorem}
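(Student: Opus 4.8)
The plan is to show that any feasible state $\vz^*$ is forced, by the node constraints alone, to encode an exact forward pass of $\vf$ at a single parameter vector $\vtheta^*$, and that this same $\vtheta^*$ drives every per-sample loss down to its global minimum; minimality for the empirical risk then follows from a trivial lower-bound argument. First I would extract the candidate minimizer: since $\vz^*$ satisfies each parameter constraint $\sC_{p_k}$, all outgoing edges of node $p_k$ share a common value, which I define to be $\theta_k^*$, and I collect these into $\vtheta^* = (\theta_1^*, \dots, \theta_{d_\theta}^*)$. Because the parameter nodes are shared across the $N$ per-sample subgraphs, this $\vtheta^*$ is one vector used by every sample.

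The core step is a topological induction showing that, for each sample $i$, the edge variables of $\vz^*$ restricted to the $i$-th subgraph coincide with the values produced by evaluating $\vf(\vx_i, \vtheta^*)$ operation by operation. Since the computation graph is a DAG, I would order its nodes topologically and prove the invariant that every edge leaving a node $u$ carries the forward-pass output of $u$. The base cases are the source nodes: constant nodes $c_j$ carry $x_{i,j}$ by $\sC_{c_j}$, and parameter nodes carry $\theta_k^*$ by construction. For the inductive step at a hidden node $h$, the incoming edges $(z^*_{uh})_{u\in\sN^-(h)}$ are outgoing edges of parents that precede $h$ in the order; the inductive hypothesis identifies these with the forward-pass outputs of those parents, and the constraint $\sC_h$ then forces every outgoing edge $z^*_{hw}$ to equal $f_h$ evaluated on exactly those inputs, i.e. the forward-pass output of $h$. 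The crucial mechanism is that each edge variable $z_{uv}$ is shared between the source constraint $\sC_u$ (which fixes its value) and the target constraint $\sC_v$ (which consumes it), so data-flow consistency is automatic rather than an extra hypothesis. Consequently the edges feeding the target node $t$ carry exactly $\vf(\vx_i, \vtheta^*)$.

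It then remains to invoke the target constraint and conclude. By $\sC_t$ the incoming values $(z^*_{ut})_{u\in\sN^-(t)}$ lie in $\argmin_{\vy'}\ell(\vy',\vy_i)$; in the proximal variant they form a fixed point of $\prox_{\lambda\ell(\cdot,\vy_i)}$, which by the characterization in Section~\ref{subsec:proj-prox} is likewise a minimizer of $\ell(\cdot,\vy_i)$. Combining with the previous step gives $\ell(\vf(\vx_i,\vtheta^*),\vy_i) = \min_{\vy'}\ell(\vy',\vy_i)$ for every $i$. Since for any parameters $\vtheta$ one has $\ell(\vf(\vx_i,\vtheta),\vy_i)\ge \min_{\vy'}\ell(\vy',\vy_i)$, the empirical risk is bounded below by $\frac{1}{N}\sum_i \min_{\vy'}\ell(\vy',\vy_i)$, and $\vtheta^*$ attains this bound termwise; hence $\vtheta^*$ minimizes Eq.~\eqref{eq:erm}.

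The arithmetic here is trivial, so the only real care is in the second paragraph: making the notion of ``forward-pass value at a node'' precise, verifying that the topological order is well defined (guaranteed since the graph is acyclic), and being explicit that the statement is conditional — a feasible $\vz^*$ is \emph{assumed} to exist, and the argument shows such a state can exist only when the network at $\vtheta^*$ realizes the per-sample loss minimizers. I expect this inductive bookkeeping across the shared, batched computation graph to be the main obstacle, whereas the concluding optimality deduction is immediate.
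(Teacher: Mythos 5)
Your proposal is correct and follows essentially the same route as the paper's own (much terser) proof: extract the consensus parameters $\vtheta^*$, use the input/hidden-node constraints to identify the feasible state with an exact forward pass at $\vtheta^*$, and invoke the target-node constraint to conclude that each per-sample loss attains its global minimum, hence so does the empirical risk. You merely make explicit the topological induction and the termwise lower-bound argument that the paper leaves implicit, which is a welcome sharpening rather than a different approach.
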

\begin{proof}
    A feasible state vector $\vz^*$ satisfies all node constraints. Specifically, satisfying the target node constraints means that the outputs of the network, $\hat{\vy}_i^*$, minimize the per-sample loss. Parameter node constraints ensure their outgoing edges yield common consensus values, $\theta_k^*$, which define the overall network parameters $\vtheta^* \in \R^{d_\theta}$. The input and hidden function node constraints then ensure that these loss-minimizing values $\hat{\vy}_i^*$ are precisely the outputs of the network function $\vf(\vx_i, \vtheta^*)$. Consequently, $\vtheta^*$ minimizes the empirical risk in \cref{eq:erm}.
\end{proof}

We solve the feasibility problem in \cref{eq:feasibility_intersection} with the iterative projection algorithms described in \cref{sec:feasibility}. When the intersection is non-empty and the sets $\sC_v$ are convex, these algorithms converge to a feasible point under standard assumptions. In the nonconvex setting, they should be viewed as heuristic iterative methods that often find useful approximate solutions in practice, but without comparable general convergence guarantees. To efficiently compute the projections in parallel across the computation graph, we leverage the structure of the graph, specifically its bipartite nature, as the following theorem details. Notably, neural network computation graphs are often bipartite; if not, they can be made bipartite by inserting identity operations (dummy nodes with $f_v(z) = z$).

\begin{theorem}[Parallelizable projections via bipartition]
    \label{thm:bipartite_projection}
    Let $G = (\sV, \sE)$ be a computation graph with a bipartition $\sV = \sA \cup \sB$, where $\sA$ and $\sB$ are disjoint sets of nodes. The constraints for each node $v \in \sV$ are defined as above. Then, the feasibility problem \cref{eq:feasibility_intersection} is equivalent to the two-set feasibility problem
    \begin{align}
        \text{Find} & \quad \vz \in \sC_{\sA} \cap \sC_{\sB}, \\
        \text{where} & \quad \sC_{\sA} = \bigcap_{v \in \sA} \sC_v \text{ and }\sC_{\sB} = \bigcap_{v \in \sB} \sC_v.
    \end{align}
    Furthermore, the projection $\proj_{\sC_{\sA}}(\vz)$ (and analogously $\proj_{\sC_{\sB}}(\vz)$) can be computed by independently (and in parallel) applying the projection operators $\proj_{\sC_v}(\vz)$ for all $v \in \sA$ (or $v \in \sB$).
\end{theorem}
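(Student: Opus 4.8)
The plan is to treat the two assertions separately: the equivalence of feasibility problems is purely set-theoretic, while the parallelizability claim rests on a single structural observation about bipartite graphs combined with the separability of product projections already recorded in Eq.~\eqref{eq:separable_projections}.

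For the first assertion, I would simply invoke that $\sA$ and $\sB$ partition $\sV$. By commutativity and associativity of intersection, $\bigcap_{v \in \sV} \sC_v = \bigl(\bigcap_{v \in \sA} \sC_v\bigr) \cap \bigl(\bigcap_{v \in \sB} \sC_v\bigr) = \sC_{\sA} \cap \sC_{\sB}$, so the $|\sV|$-set and the two-set feasibility problems have identical solution sets. No further work is needed here.

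For the parallelizability claim, the key is to make precise that each node constraint $\sC_v$ touches only the edges incident to $v$. Writing $\sE_v \subseteq \sE$ for the set of edges with an endpoint at $v$, I would inspect each node type (constant input, parameter, target, hidden function): in every case the defining equations of $\sC_v$ constrain only the variables $(z_e)_{e \in \sE_v}$ and leave all remaining coordinates free. Hence $\sC_v = \tilde{\sC}_v \times \R^{\sE \setminus \sE_v}$ for some $\tilde{\sC}_v \subseteq \R^{\sE_v}$, and $\proj_{\sC_v}$ acts as a genuine projection on the $\sE_v$-block while acting as the identity on the complementary coordinates. The bipartite hypothesis then enters through the observation that for distinct $v, v' \in \sA$ the sets $\sE_v$ and $\sE_{v'}$ are disjoint: an edge in both would have to join $v$ and $v'$, but bipartiteness forbids edges inside $\sA$. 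Since every edge has exactly one endpoint in $\sA$, the family $\{\sE_v\}_{v \in \sA}$ in fact partitions $\sE$, giving the identification $\R^{\sE} \cong \prod_{v \in \sA} \R^{\sE_v}$ under which $\sC_{\sA} = \bigcap_{v \in \sA}\bigl(\tilde{\sC}_v \times \R^{\sE \setminus \sE_v}\bigr)$ becomes the product $\prod_{v \in \sA} \tilde{\sC}_v$. Applying Eq.~\eqref{eq:separable_projections} to this product yields $\proj_{\sC_{\sA}}(\vz) = \bigl(\proj_{\tilde{\sC}_v}(\vz_{\sE_v})\bigr)_{v \in \sA}$, and since each block projection coincides with the restriction of $\proj_{\sC_v}(\vz)$ to $\sE_v$ over disjoint blocks, all operators $\proj_{\sC_v}$ with $v \in \sA$ can be evaluated simultaneously and their outputs concatenated. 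The argument for $\sB$ is verbatim.

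The main obstacle is the structural step: verifying that the constrained edge sets $\sE_v$ are pairwise disjoint within one part and that the intersection therefore collapses to a product. This is exactly where bipartiteness is indispensable—without it, two same-part nodes (e.g.\ a hidden node feeding another hidden node in the same part) could share an edge, coupling their constraints and destroying the product structure. The remaining points are routine but worth stating cleanly: that each node type's constraint genuinely leaves non-incident coordinates unconstrained, and that in the non-convex case the independent per-block selection of minimizers is precisely what Eq.~\eqref{eq:separable_projections} licenses, so set-valuedness causes no difficulty.
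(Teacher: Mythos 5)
Your proposal is correct and follows essentially the same route as the paper: the set-theoretic identity for the equivalence, then the observation that bipartiteness makes the incident-edge sets of same-part nodes disjoint, so $\sC_{\sA}$ factors as a product and Eq.~\eqref{eq:separable_projections} applies. Your write-up is somewhat more explicit than the paper's (you note that $\{\sE_v\}_{v\in\sA}$ actually partitions $\sE$ and that the non-convex set-valued case is covered by the product-projection identity), but the underlying argument is identical.
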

\begin{proof}
    The equivalence $\bigcap_{v \in \sV} \sC_v = \sC_{\sA} \cap \sC_{\sB}$ is definitional.
    Consider distinct nodes $u, w \in \sA$. By the bipartition, they are not adjacent. Since $\proj_{\sC_u}$ only modifies edge variables incident to $u$ (similarly for $w$), and $u,w$ share no incident edges, these projections act on disjoint sets of coordinates in $\vz$.
    Therefore, all operators $\{\proj_{\sC_v}\}_{v \in \sA}$ modify mutually disjoint components of $\vz$. By \cref{eq:separable_projections} (projections onto product sets), $\proj_{\sC_{\sA}}(\vz)$ is then computed by applying these individual projections independently, enabling parallel execution. An analogous argument for $\proj_{\sC_{\sB}}(\vz)$ reduces the original problem to a two-set feasibility problem with parallelizable projection steps.
\end{proof}

\Cref{alg:training} summarizes the complete projection-based training procedure with batch processing.

\begin{algorithm}[!t]
    \caption{Projection-based training}
    \label{alg:training}
    \begin{algorithmic}[1]
        \Require Model $\vf$, Loss function $\ell$, Initial parameters $\vtheta_0$, Dataset $\sD$, Batch size $B$, Projection steps per batch $K$, Projection method \texttt{ProjMethod}.
        \Ensure Optimized parameters $\vtheta$.
        \State $\vtheta \leftarrow \vtheta_0$
        \While{not converged}
            \State $((\vx_1, \vy_1), \ldots, (\vx_B, \vy_B)) \sim \sD$
            \State $G \gets \text{computation graph for batch loss (\ref{eq:batch_loss})}$
            \State $\vz \leftarrow \text{initial edge state vector} \in \R^{\sE}$
            \For{$k = 1$ to $K$}
                \State $\vz \leftarrow \texttt{ProjMethod}(G, \vz, \{\proj_{\sC_v}\}_{v \in \sV})$
            \EndFor
            \State $\vtheta \leftarrow \text{extract parameters from } \vz$
        \EndWhile
        \State \Return $\vtheta$
    \end{algorithmic}
\end{algorithm}

\section{COMPLEXITY ANALYSIS}
\label{sec:complexity}
Having detailed the projection-based training methodology, we now analyze its computational and memory complexity. 
All considered projection algorithms perform a single projection per step onto the constraints $\sC_v$ for each node $v \in \sV$. We focus on the complexity of these projections, as they dominate the overall computational cost.

\paragraph{Computational complexity per step}
Projecting onto a single node's constraint $\sC_v$ typically has a cost comparable to the forward evaluation of the primitive function $f_v$. For example, the projection for the dot product, which implements neuron pre-activation, involves solving a scalar quintic equation (e.g., via 5 Newton steps) followed by rescaling inputs/outputs. Similarly, the projection for ReLU computes two candidate solutions in closed form and selects the one minimizing distance. See \cref{sec:primitive_projections} for details on these and other primitive functions.
Therefore, the total cost per step of the projection method is roughly proportional to the cost of executing all primitive operations in the computation graph, similar to a forward pass. However, unlike the sequential forward and backward passes that backpropagation requires, the projection updates within each partition ($\sA$ or $\sB$) are fully parallelizable.

\paragraph{Memory complexity}
Storing the state vector $\vz \in \R^{\sE}$ leads to $O(|\sE|)$ memory complexity. In contrast, standard backpropagation only requires storing a primitive function's output and its gradient, resulting in $O(|\sV|)$ complexity. Since many networks have significantly more edges $|\sE|$ than nodes $|\sV|$, our method generally requires more memory. This requirement is pronounced in architectures with extensive weight sharing. For instance, during \textbf{batch processing}, processing $N$ samples requires replicating the state associated with shared parameters $N$ times in $\vz$, as each sample interacts with the parameters via distinct edges in the expanded graph. Similarly, in \textbf{sequence models (e.g., RNNs)}, unrolling over $T$ time steps necessitates storing distinct edge states for shared parameters at each step. \textbf{Convolutional networks (CNNs)} also exhibit this effect: applying a convolutional kernel at multiple spatial locations means each application corresponds to graph connections that require separate edge states. Our experiments made this evident: the high memory demands of the 4-layer CNN necessitated reducing the number of hidden units per layer to 16 to fit within the available 96GB GPU memory. In contrast, architectures with less parameter sharing, like MLPs, could accommodate larger sizes. Gradient-based methods are more memory-efficient in these cases because they aggregate gradients for shared parameters (e.g., by summation) and only need to store one copy of the parameters and their accumulated gradients.

In summary, the projection-based approach trades potentially higher memory usage, especially with shared parameters, for significant gains in parallelizability compared to gradient-based methods. The overall convergence rate (number of steps) depends on the specific problem and projection algorithm used. However, as our experiments empirically demonstrate (\cref{sec:experiments}), projection-based training can achieve convergence rates that are competitive with first-order SGD methods on several tasks.

\section{EXPERIMENTS}
\label{sec:experiments}
This section presents an empirical evaluation of our projection-based training method, outlined in \cref{alg:training}. We compare three projection methods (\texttt{ProjMethod}), Alternating Projections (AP), Douglas-Rachford (DR), and Cyclic Projections (CP), against standard gradient-based optimizers, Stochastic Gradient Descent (SGD) and Adam~\citep{kingma2014adam}, and non-backpropagation baselines, Feedback Alignment (FA)~\citep{lillicrap2016random} on MLPs and Forward–Forward (FF)~\citep{hinton2022forward} on MLPs and CNNs. For AP and DR, we utilize the two-set feasibility problem derived from the bipartite graph formulation (\cref{thm:bipartite_projection}). For CP, we apply the multi-set feasibility problem formulation directly, projecting cyclically onto node constraints and merging constraints within the same layer for efficiency (e.g., performing all ReLU projections in parallel). The projection order for CP follows a backward breadth-first search (BFS) from the target node.

\subsection{Datasets and tasks}
We evaluate our approach on diverse standard machine learning tasks. These include image classification using \textbf{MNIST} ($28 \times 28$ grayscale images, 10 classes) and \textbf{CIFAR-10} ($32 \times 32$ color images, 10 classes), binary classification on the \textbf{HIGGS} dataset (distinguishing signal from background noise in particle physics), and character-level language modeling with the \textbf{Shakespeare} dataset. We use the default train/test splits (approximately 80\% train, 20\% test) and reserve 10\% of training data for validation (hyperparameter tuning and early stopping). Across all tasks, we employ the standard cross-entropy loss and report accuracy as the primary evaluation metric. For the sequence modeling task, accuracy specifically refers to the fraction of correctly predicted next characters.

\subsection{Neural network architectures}
We test three representative architectures:
A \textbf{Multilayer Perceptron (MLP)} with fully connected layers (linear + bias) and ReLU activations.
A \textbf{Convolutional Neural Network (CNN)} with $3 \times 3$ convolutional layers (stride 1, padding 1) and ReLU activations; the output feature maps are spatially max-pooled into vectors before passing through a final linear layer.
We implement a \textbf{Recurrent Neural Network (RNN)} with an MLP cell that processes the concatenation of a learned character embedding and the previous hidden state ($\vh_{t-1}$) at each step $t$. The MLP outputs both logits for the next character and a raw hidden state ($\vh_t'$). Activating this state via $\vh_t = \operatorname{ReLU}(\vh_t')$ produces the input state for step $t+1$.

For each architecture, we evaluate both shallow (1 hidden layer) and deep (4 hidden layers) variants. For deep models, we also test a skip-connection design tailored to projection-based training: we concatenate the post-activation outputs of all hidden layers and feed the resulting vector to the final linear readout. This shortens the path from early layers to the output, which is important because projection-based training relies on local projection operators that only update adjacent variables in the computation graph. Without additional shortcuts, information from early layers reaches the output only after many iterations. Crucially, the design preserves the bipartite structure of the computation graph and thus our parallel projection scheme. It differs from the usual ResNet-style residual connection, which replaces a module $f(x)$ with $x+f(x)$ and does not limit the distance to the output as effectively.

\paragraph{Implementation and hyperparameters}
We implement projection-based methods (AP, DR, CP) using our PJAX framework (\cref{sec:pjax}) and all baselines with JAX and Flax \citep{flax2020github}, ensuring fair comparison, as PJAX uses JAX as its backend. Notably, defining models in PJAX's \texttt{pjax.nn} API closely mirrors standard JAX/Flax usage. For the output layer constraint, we use the proximal operator for cross-entropy (\cref{thm:cross-entropy-prox}, with $\lambda=5$), finding it more robust than margin constraints (\cref{thm:margin-loss}), which require careful tuning of the margin parameter.

FA and FF are implemented in line with their original formulations \citep{lillicrap2016random, hinton2022forward}. Neither method is applied to skip-connection architectures, which are not part of their original design. For FF, we report results with both SGD and Adam optimizers, since performance varies substantially with the choice of optimizer.

We use consistent parameters across experiments: learning rate $10^{-3}$ for SGD, Adam and FF, $10^{-4}$ for FA, batch size $256$ for all methods, and $K=50$ projection steps per batch for AP/DR/CP. Each projection step counts as one training step, so projection methods process $50$ times fewer batches than baseline methods per reported step. Accordingly, the horizontal axes in \cref{fig:training_curves,fig:mnist_mlp_depth_skip} compare optimization steps rather than equal amounts of data processed.

\subsection{Results}
\label{sec:results}
\begin{figure*}[!t]
    \centering
    \begin{subfigure}[b]{0.3\textwidth}
        \includegraphics[width=\linewidth]{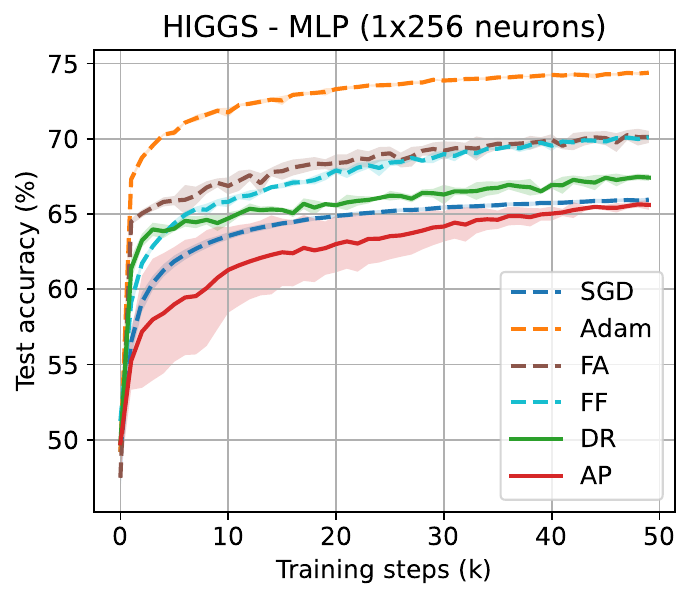}
    \end{subfigure}
    \hfill
    \begin{subfigure}[b]{0.3\textwidth}
        \includegraphics[width=\linewidth]{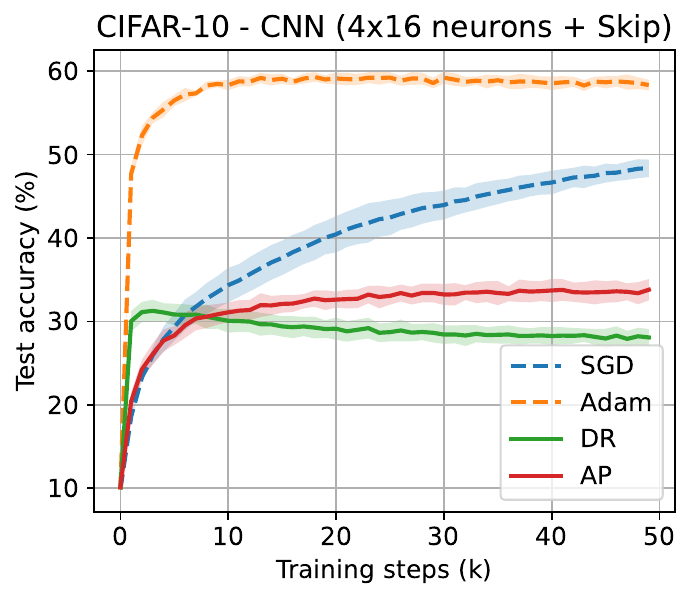}
    \end{subfigure}
    \hfill
    \begin{subfigure}[b]{0.3\textwidth}
        \includegraphics[width=\linewidth]{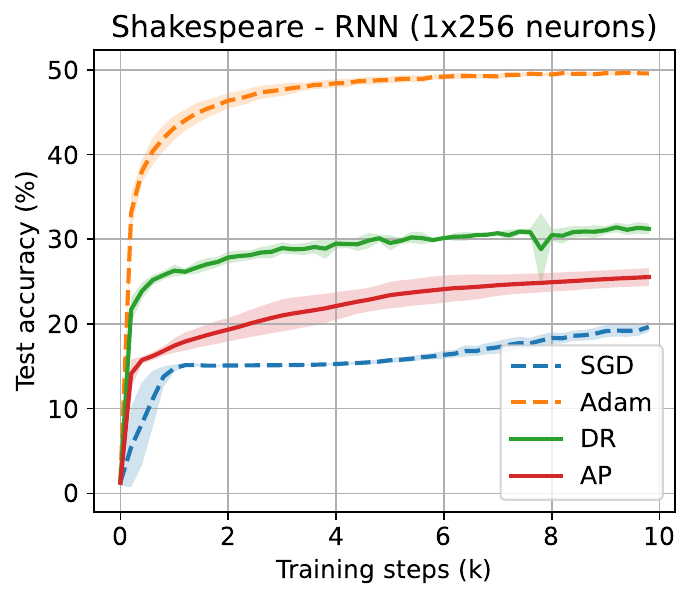}
    \end{subfigure}
    \caption{Test accuracy vs. training steps.}
    \label{fig:training_curves}
\end{figure*}

\begin{figure*}[!t]
    \centering
    \begin{subfigure}[b]{0.32\textwidth}
        \includegraphics[width=\linewidth]{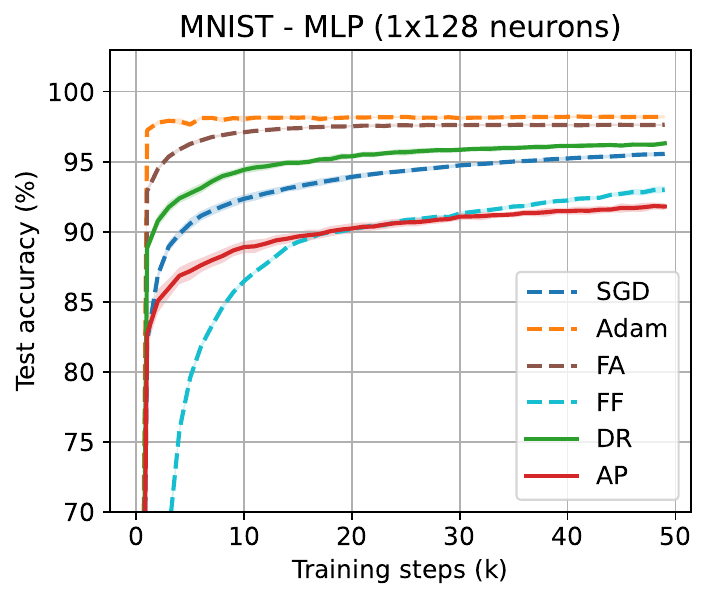}
    \end{subfigure}
    \hfill
    \begin{subfigure}[b]{0.32\textwidth}
        \includegraphics[width=\linewidth]{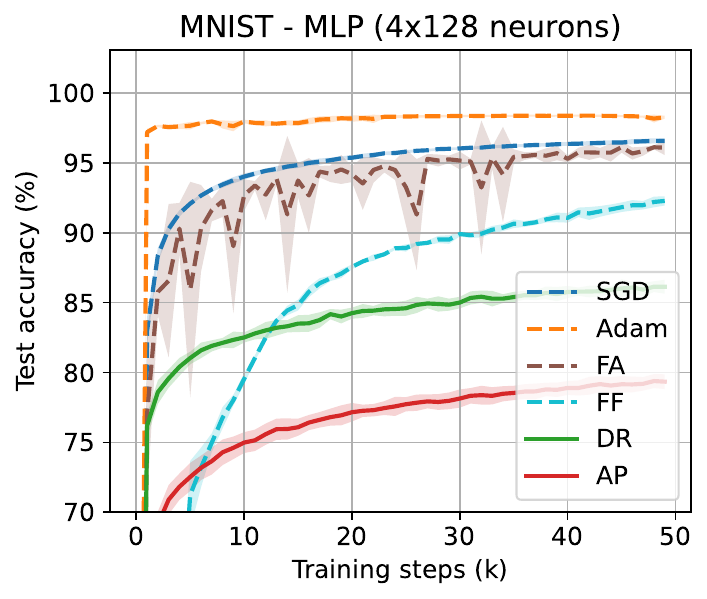}
    \end{subfigure}
    \hfill
    \begin{subfigure}[b]{0.32\textwidth}
        \includegraphics[width=\linewidth]{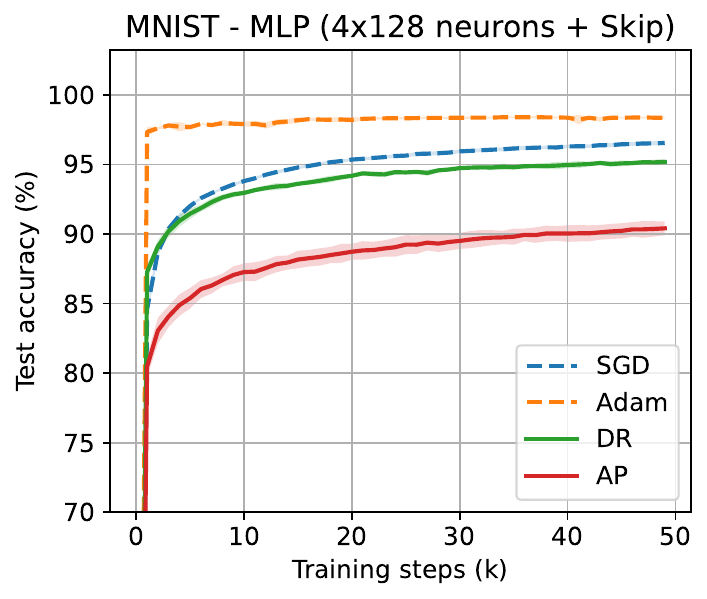}
    \end{subfigure}
    \caption{Impact of network depth and skip connections.}
    \label{fig:mnist_mlp_depth_skip}
\end{figure*}

We present an overview of our experimental findings here. \Cref{sec:results_tables} provides detailed numerical results, including final test accuracies, convergence steps, and timings of all conducted experiments on a single NVIDIA H100 GPU with 96GB of memory. 

\paragraph{Overall trends.}
\Cref{fig:training_curves} displays representative test accuracy curves from our experiments, highlighting a general trend: while Adam consistently achieves the highest test accuracy and generally converges fastest, projection-based methods prove viable across tasks and architectures. Among the projection algorithms, DR often outperforms AP on MLPs and RNNs in accuracy, while AP is competitive on CNNs. CP yields almost identical accuracy to AP but is slower due to its sequential nature (hence we omit CP from plots for clarity; see \cref{sec:results_tables} for results). We also find that AP/CP tend to exhibit higher run-to-run variance than DR, likely due to greater sensitivity to initialization. DR's reflection (overshoot) step enlarges the effective basin of attraction and reduces stalling in shallow basins, leading to more consistent outcomes across seeds. The specific characteristics of projection-based training, such as convergence to optimal solutions and step efficiency, vary with network architecture. For \textbf{MLPs} (\cref{tab:mlp_results}), projection methods, particularly DR in shallow cases, approach SGD's accuracy with notable computational efficiency per step (often $\sim 10\times$ faster). For \textbf{CNNs} (\cref{tab:cnn_results}), the accuracy gap relative to gradient-based methods widens, and step speed decreases due to higher memory requirements for shared parameters (see \cref{sec:complexity} for theoretical details). For \textbf{RNNs} (\cref{tab:rnn_results}), projection methods possess a structural advantage, as no backpropagation through time is performed, thereby sidestepping issues with vanishing or exploding gradients (see \cref{fig:rnn}). This results in significantly faster convergence in terms of training steps compared to SGD, though Adam still achieves the best overall results, managing training dynamics effectively. The memory demands for shared parameters in RNNs also impact step times, similar to CNNs.

\paragraph{FF and FA baselines.}
Our experiments do not reveal a consistent ranking between projection-based methods and FF/FA baselines for all tasks. On CIFAR-10 CNN, for example, the 4-layer projection-based model with skip connections outperforms both FF (even when trained with Adam) and FA (implemented without skips, as in prior work). In other settings, FF or FA achieve better performance. A clear pattern is that FF is highly optimizer-dependent: it performs well with Adam but degrades sharply with SGD. \Cref{fig:training_curves,fig:mnist_mlp_depth_skip} report FF results with Adam, while additional SGD results are provided in \cref{sec:results_tables}.

\paragraph{Depth and skip connections.}
A key consideration for projection-based methods, which rely on local updates, is their effectiveness in deeper networks. \Cref{fig:mnist_mlp_depth_skip} explores this using an MLP trained on MNIST with the Douglas-Rachford (DR) algorithm. While shallow MLPs (left panel) train readily, the performance of a deep MLP without skip connections (center panel) degrades, highlighting challenges in propagating information through many layers via local projections alone. The introduction of skip connections (right panel) substantially improves training for the deep MLP. These connections provide shorter paths for information flow, proving crucial for effective learning in deeper architectures using projection methods, while maintaining the bipartite graph structure essential for parallelization.

\begin{figure}
    \centering
    \includegraphics[width=0.9\columnwidth]{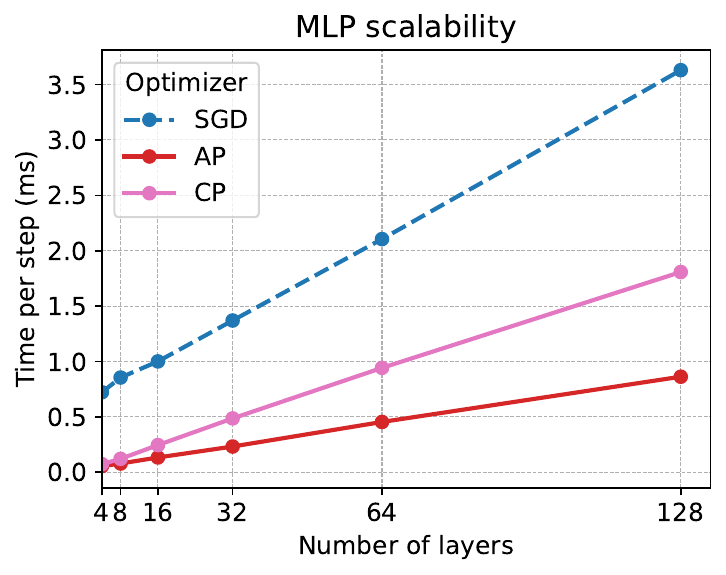}
    \caption{Scalability of projection vs. gradient-based methods with network depth.}
    \label{fig:mlp_scalability}
    \vspace{-0.8em}
\end{figure}

\begin{figure*}[!t]
    \centering
    \includegraphics[width=0.9\textwidth]{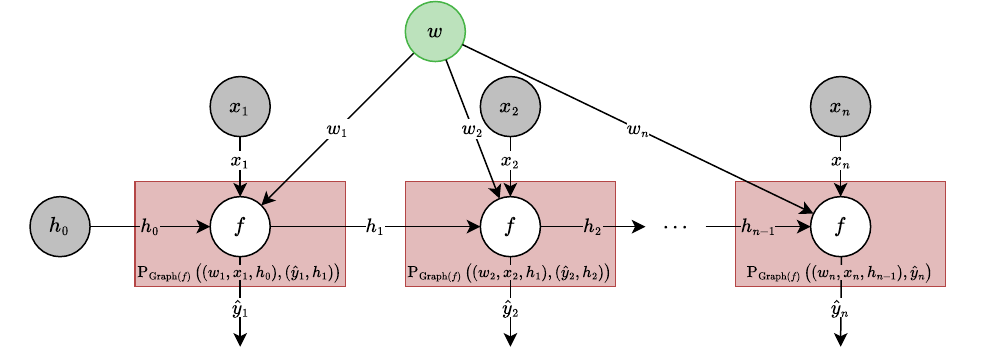}
    \caption{Computation graph for RNN: each unrolled cell $f$ has a local parameter copy ($w_t$). Projections are performed separately (possibly concurrently) for each time step. Parameter consensus is enforced through the parameter node.}
    \label{fig:rnn}
    \vspace{-0.8em}
\end{figure*}

\Cref{fig:mlp_scalability} further analyzes the computational efficiency of projection methods. It compares step times of SGD, CP, and AP for an MLP (16 hidden units per layer) on a single MNIST sample, across varying network depths. These algorithms serve as direct proxies for the gradient-based, sequential projection, and parallel projection paradigms, respectively, avoiding the complexities of other optimizers like Adam or more involved projection schemes like DR. The analysis reveals two main trends. First, both CP and AP achieve lower step times than SGD, likely due to their intensive data reuse ($K=50$ updates per sample). Second, AP's step time increases less steeply with network depth compared to SGD and CP. This favorable scaling stems from the parallelization of updates across all layers via the bipartite graph structure (\cref{thm:bipartite_projection}), a concurrency efficiently harnessed by modern hardware like GPUs, in contrast to the sequential processing inherent in SGD (backpropagation) and CP (cyclic projections). Consequently, AP's parallel advantage becomes more pronounced for deeper networks.

\section{DISCUSSION}
\label{sec:discussion}
We reframe neural network training as a feasibility problem and solve it with iterative projections. Because the method requires projections rather than derivatives, it naturally supports non-differentiable components such as quantization (\cref{thm:quantization}) and logical constraints (e.g., the margin loss in \cref{thm:margin-loss}). Updates are local to adjacent variables in the computation graph, which removes global error backpropagation, enables parallelization across network components, and aligns with biological learning principles. The PJAX framework (\cref{sec:pjax}) composes projection operators analogously to how autodiff composes derivatives via the chain rule, providing a practical tool for investigating and extending this paradigm. To the best of our knowledge, it is the only gradient-free training approach accompanied by a general-purpose framework rather than a single-task implementation.

Beyond these advantages, our formulation keeps the standard vector-to-vector interface with single-pass inference. In contrast, \emph{Forward--Forward}~\citep{hinton2022forward} learns a scalar ``goodness'' for each (input, label) pair, which implies a per-class forward pass at inference and presents challenges for scaling to large output spaces (as in language modeling) or extending to continuous targets.

Empirically, projection methods train diverse architectures (MLPs, CNNs, and RNNs) reliably. They achieve competitive step times and benefit from parallel updates; RNNs particularly benefit from avoiding backpropagation through time. Douglas–Rachford is often the most stable, though performance still falls short of highly optimized adaptive gradient methods such as Adam.
For both gradient- and projection-based optimizers, the train-test gap is often modest (\cref{sec:results_tables}), suggesting that the gap reflects limits in fitting the training data rather than a fundamental lack of generalization.

The main limitation is memory. The algorithm maintains distinct edge variables for each interaction between parameters and data, so requirements scale with batch size, sequence length, and the number of convolutional locations.

Promising directions include: (i) improved projection dynamics (adaptive damping/relaxation, preconditioning, acceleration, learned step sizes); (ii) hybrid schemes that interleave projections with occasional gradient steps; (iii) tailored architectures that shorten paths to the output and use structured sparsity (e.g., Mixture-of-Experts) to limit parameter–data interaction; (iv) memory-saving techniques (low-rank or quantized edge states); and (v) refined analyses of convergence and generalization for nonconvex, possibly set-valued projections. We hope PJAX lowers the barrier to exploring these directions.

\section*{ACKNOWLEDGMENTS}
This project was funded by the Alexander von Humboldt Foundation.

\bibliographystyle{plainnat}
\bibliography{references}
\clearpage
\appendix
\thispagestyle{empty}
\onecolumn
\aistatstitle{A Projection-Based Framework for Gradient-Free and Parallel Learning: \\
Supplementary Materials}

\section{RELATED WORK}
\label{sec:related_work} 
Deep neural network training is overwhelmingly dominated by gradient-based optimization methods. However, the limitations of gradient-based methods and inspiration from neuroscience have driven research into non-gradient-based learning. This section reviews prominent families of these methods.

\subsection{Gradient-based optimization: The standard paradigm}
Gradient-based methods are foundational in deep learning. The backpropagation algorithm, as popularized in the seminal work by \citet{rumelhart1986learning}, remains the dominant paradigm for training neural networks. Backpropagation leverages the chain rule of differentiation to compute the gradient of a global loss function with respect to all network parameters efficiently. These gradients are then typically used within variants of stochastic gradient descent (SGD) to iteratively update the parameters and minimize the loss.

To address issues such as slow convergence, navigation of complex loss landscapes, and sensitivity to learning rates, several improvements to vanilla SGD have been proposed. Momentum-based methods, including Polyak's momentum \citep{polyak1964some} and Nesterov accelerated gradient \citep{nesterov1983method}, incorporate a history of past updates to accelerate progress along consistent descent directions and dampen oscillations. Adaptive methods such as AdaGrad \citep{duchi2011adaptive}, RMSProp \citep{tieleman2012lecture}, and Adam \citep{kingma2014adam} adjust learning rates on a per-parameter basis, adapting to the geometry of the loss landscape and helping networks train more robustly, especially with sparse gradients.
For large models, memory-efficient variants of backpropagation have also been developed, e.g., reducing activation storage or recomputation overheads \citep{korthikanti2023reducing, xu2025activationsharding}.

Despite their success, gradient-based methods face several challenges. They can converge to suboptimal local minima or encounter prevalent high-error saddle points, particularly in high-dimensional non-convex optimization problems~\citep{dauphin2014identifying, choromanska2015loss}. Furthermore, they can suffer from vanishing or exploding gradients in deep networks \citep{hochreiter1991untersuchungen, bengio1994learning}, require the objective function and network components to be differentiable (or sub-differentiable), and can be sensitive to hyperparameter choices. The computational cost and sequential nature of computing and propagating gradients through deep networks can also be significant bottlenecks for parallelization across layers. These limitations motivate the exploration of alternative training schemes.

\subsection{Derivative-free and zeroth-order methods}
A direct alternative is to forgo gradient calculation entirely and use derivative-free optimization (DFO) or zeroth-order methods. These methods typically rely only on evaluations of the objective function (the global loss).
Evolution Strategies (ES) \citep{rechenberg1978evolutionsstrategien, schwefel1977evolutionsstrategien} are a prominent example. ES optimizes parameters by sampling points in the parameter space around the current estimate, evaluating the loss at these points, and moving the estimate in a direction informed by these evaluations, effectively approximating a gradient direction or directly seeking improvement \citep{salimans2017evolution}. While computationally intensive due to the need for multiple forward passes per update, ES has shown surprising effectiveness, particularly in reinforcement learning domains \citep{salimans2017evolution}.
Other population-based methods like Genetic Algorithms (GAs) \citep{holland1992adaptation} and Particle Swarm Optimization (PSO) \citep{kennedy1995particle} maintain a population of candidate solutions (network parameters) and iteratively refine them using operators inspired by biological evolution (selection, crossover, mutation) or social behavior (particle movement). The objective is typically to maximize a fitness function directly related to the global loss. Evolutionary approaches have also been applied to meta-learning problems, such as optimizing learning rules or hyperparameters \citep{schmidhuber1987evolutionary}.
The primary objective of these DFO methods is the minimization of the global loss function. However, they often exhibit high sample complexity compared to gradient-based methods, requiring many loss evaluations, which can be prohibitive for large networks and datasets. Scalability to extremely high-dimensional parameter spaces remains a challenge.

\subsection{Biologically inspired local learning rules}
Drawing inspiration from neuroscience, another class of methods employs local learning rules that do not require global error backpropagation.
Rooted in neuroscientific theory, Hebbian learning follows the principle ``cells that fire together, wire together.'' This principle, introduced by Donald Hebb in his 1949 book \citep{Hebb1949}, states that the synaptic strength between two neurons increases if they are co-activated. Early neural network research used simplified Hebbian rules for unsupervised representation learning. Modern variants, such as Oja's rule \citep{oja1982simplified} and the Bienenstock--Cooper--Munro (BCM) rule \citep{bienenstock1982theory}, build on the classical Hebbian framework to ensure weight stabilization and learn principal components or other low-dimensional representations. Relatedly, attractor networks like the Hopfield network \citep{hopfield1982neural} also utilize Hebbian-style rules to store patterns as stable states (attractors) within an energy function, suitable for tasks like associative memory and pattern completion.
Another important family of models leveraging local updates includes energy-based models like Boltzmann Machines (BMs) and particularly their simplified variant, Restricted Boltzmann Machines (RBMs) \citep{hinton2002training}. RBMs are stochastic networks trained to model a probability distribution over their inputs, typically via Maximum Likelihood. While exact gradient calculation is intractable for general BMs, RBMs can be effectively trained using algorithms like Contrastive Divergence (CD) \citep{hinton2002training}, which relies on Gibbs sampling and local computations between connected layers to approximate the gradient. This made RBMs instrumental as building blocks for Deep Belief Networks \citep{hinton2006fast}, offering a biologically more plausible route to unsupervised feature learning and generative modeling.
A common appealing property of these various local learning rules, whether Hebbian or energy-based, is their independence from global gradient signals. Instead, they rely on updates based on local neuronal activity (e.g., correlations, timing, sampling statistics) or principles like energy minimization. The objective is often related to maximizing correlation, capturing variance, achieving representational stability, minimizing a network energy function, or modeling data distributions, rather than directly minimizing a global supervised loss function for input-output mappings. This locality can be advantageous for large-scale or biologically plausible architectures where global signals are unavailable or costly. However, directly applying or adapting these principles to match the performance of gradient-based methods on complex supervised tasks remains challenging.

Inspired by the brain's event-driven architecture, neuromorphic computing focuses on spiking neural networks (SNNs) that communicate via discrete spike events instead of continuous-valued activations. Conventional backpropagation becomes difficult in spiking systems due to the non-differentiable nature of spike functions. As a result, a variety of alternative learning rules, such as Spike-Timing-Dependent Plasticity (STDP) in spiking neural networks \citep{song2000competitive, gerstner2002spiking}, have emerged. STDP aligns closely with Hebbian-like principles, adjusting synaptic strengths based on the relative timing of pre- and post-synaptic spikes. The objective here is local synaptic modification driven by temporal correlations. Neuromorphic chips like IBM's TrueNorth \citep{merolla2014million} and Intel's Loihi \citep{davies2018loihi} have been designed to implement such models efficiently, potentially offering significant energy savings. Because learning in SNNs often occurs locally within each synapse, neuromorphic approaches can circumvent the need for global gradient signals. However, training SNNs for complex, real-world tasks comparable to those solved by deep learning remains an active research area, and performance often lags behind conventional networks. Such methods also inherently handle non-differentiable spike events, a property shared with the projection-based methods discussed in this work.

\subsection{Gradient approximation and decoupled training methods}
Several approaches attempt to retain some benefits of gradient-based learning while avoiding full backpropagation, often motivated by biological plausibility (e.g., the weight transport problem) or computational considerations.
Target Propagation (TP) \citep{bengio2014auto, lee2015difference} aims to compute layer-specific targets instead of gradients. An inverse mapping or autoencoder associated with each layer generates target activations that, if achieved, would reduce the global loss. The layer then updates its weights locally to better map its input to these targets. The objective is local: minimize the mismatch between layer outputs and computed targets, serving as a proxy for the global loss. TP can avoid propagating precise gradients but requires learning or defining inverse mappings, which can be challenging and potentially unstable.
Equilibrium Propagation \citep{scellier2017equilibrium} is another line of work in this direction: it computes updates from differences between free and weakly clamped network states in energy-based models, avoiding explicit reverse-mode backpropagation through layers.
Feedback Alignment (FA) \citep{lillicrap2016random} replaces the transposed weight matrices used in backpropagation's backward pass with fixed, random feedback matrices. Surprisingly, the network can learn by aligning its forward weights to leverage these random pathways for error signaling. The objective remains the global loss, but the gradient information is approximated. FA solves the weight transport problem but typically learns slower and may achieve lower final performance compared to backpropagation. Direct Feedback Alignment (DFA) \citep{nokland2016direct} further simplifies this by sending the error signal directly from the output layer to each hidden layer via fixed random matrices.
Decoupled Neural Interfaces (synthetic gradients) \citep{jaderberg2017decoupled} similarly reduce strict sequential dependencies by learning gradient predictors for intermediate modules.
Layer-wise training provides another way to decouple learning. Initially popular for pre-training Deep Belief Networks \citep{hinton2006fast}, this involves training layers sequentially, often using unsupervised objectives like reconstruction error before potential end-to-end fine-tuning. The objective is local per layer/stage. While useful for initialization, purely greedy layer-wise training may not yield globally optimal solutions for the final task.
More recently, the Forward-Forward algorithm \citep{hinton2022forward} was proposed as a potential alternative inspired by biology. It discards backpropagation entirely, using two forward passes—one with positive (real) data and one with negative data—and updating weights based on a local goodness metric specific to each layer. The objective is layer-local: maximizing goodness for positive samples and minimizing it for negative samples. This avoids backpropagation but requires generating negative data and doubles the computation per update compared to a single forward pass. Its scalability and performance across diverse tasks are still under investigation.
Further exploring alternatives to full backpropagation, \citet{radhakrishnan2024mechanism} recently introduced the Average Gradient Outer Product (AGOP) as a backpropagation-free mathematical mechanism to characterize and enable feature learning. Their work demonstrates that AGOP captures learned features across diverse architectures and can instill feature learning capabilities in models like kernel machines, notably through their Recursive Feature Machine (RFM) algorithm.

\subsection{Alternative mathematical optimization frameworks}
Beyond heuristic or bio-inspired approaches, alternative mathematical optimization frameworks have been applied to neural network training, often by reformulating the learning problem. The Alternating Direction Method of Multipliers (ADMM)~\citep{boyd2011distributed} is a general framework for constrained optimization that decomposes a large problem into smaller, potentially easier subproblems that are solved iteratively. It has been explored for training neural networks, sometimes by introducing auxiliary variables and constraints to decouple layers or enforce structure \citep{glowinski1975approximation, taylor2016training}. The Method of Auxiliary Coordinates (MAC) represents another such decomposition strategy for deeply nested systems~\citep{carreira2014distributed}. The objective in these frameworks is typically the original global loss, subject to reformulations. ADMM can handle non-differentiable regularizers and constraints but requires careful problem formulation, and solving the subproblems efficiently can be challenging, often involving overhead from dual variable updates or the solution of complex subproblems like large matrix inversions~\citep{taylor2016training}.
Block Coordinate Descent (BCD) methods optimize the network parameters block by block (e.g., layer by layer) while keeping other parameters fixed \citep{bertsekas1997nonlinear}. Each subproblem optimizes the global loss with respect to a subset of variables. BCD can be simpler to implement and potentially more memory-efficient than SGD for certain structures, but its convergence can be slow, dependent on the block partitioning strategy, and it may struggle with highly correlated parameters.

Crucially, the perspective of training as a \textbf{feasibility problem} (finding parameters $\vtheta$ such that $\vf(\vx_i, \vtheta) = \vy_i$ for all $i$) provides a distinct reformulation suitable for iterative projection methods like AP or DR. This ``learning without loss'' approach was conceptually explored by \citet{elser2021learning}, who proposed using the Difference Map algorithm (related to DR) to find feasible points in the intersection of constraints derived from individual data samples. Elser demonstrated the concept with illustrative examples. The projection-based method detailed in our work builds directly on this feasibility perspective, leveraging a fine-grained decomposition based on the computation graph's primitive functions, which allows for efficient, parallelizable projection steps.

\subsection{Symbolic, logic-based, and combinatorial optimization approaches}
A less common direction treats network training or design as a discrete or symbolic problem. By formulating aspects of a neural network's parameters or structure (e.g., activation functions, connectivity) as a discrete optimization problem, one can sometimes leverage mature combinatorial solvers like SAT solvers or integer programming techniques. These approaches often arise in the context of ``neuro-symbolic AI,'' aiming to integrate neural learning with symbolic reasoning. The objective might be to find a network satisfying certain logical constraints or optimizing a discrete objective. However, these methods often suffer from severe scalability challenges as network dimensions grow and are typically applicable only to specific problem types or network architectures.

\section{PJAX: A PROJECTION-BASED NUMERICAL COMPUTATION FRAMEWORK}
\label{sec:pjax}

Building upon the reformulation of neural network training as a feasibility problem solvable via projection methods (\cref{sec:method}), we introduce PJAX, a numerical computation framework designed to implement and solve such problems efficiently. PJAX aims to be an analogue of modern automatic differentiation (autodiff) libraries like Theano~\citep{team2016theano}, TensorFlow~\citep{abadi2016tensorflow}, PyTorch~\citep{paszke2019pytorch}, and JAX~\citep{jax2018}. However, unlike autodiff libraries that compute gradients, PJAX's core computational mechanism revolves around projection operators. By leveraging JAX as its backend (hence the name PJAX, with ``P'' signifying Projection), our framework inherits JAX's just-in-time (JIT) compilation capabilities and seamless execution across different hardware accelerators, including CPUs, GPUs, and TPUs.

The core design principle of PJAX mirrors that of autograd frameworks: users write numerical code using functions provided by the library, and the framework automatically tracks these computations to build an underlying representation—in our case, a computation graph suitable for projection-based methods rather than gradient calculation. An optimizer module then utilizes this graph and the associated projection operators to find feasible solutions to the defined problem. The primary goal is to offer the core functionality of a projection-based computation framework with an interface familiar to users of JAX or similar libraries, complemented by a high-level API for streamlined machine learning model definition.

\subsection{Core components}
PJAX is built upon core components implemented using JAX primitives. Accessed only indirectly via the high-level API, these components form the foundational building blocks, implementing the essential primitive functions and projection operators required for the optimization process:

\paragraph{Primitive functions} correspond to the hidden function nodes in the computation graph described in \cref{sec:computation_graph}. Examples include fundamental operations like \texttt{identity}, \texttt{add}, \texttt{dot}, \texttt{sum\_relu}, \texttt{max}, and \texttt{quantize}. For each primitive function $f$, we implement both its standard forward evaluation and the corresponding projection operator $\proj_{\graph(f)}$. Detailed definitions and derivations of these projection operators are included in \cref{sec:primitive_projections}.

\paragraph{Loss functions} correspond to the target nodes in the computation graph. For now, we provide a \texttt{cross\_entropy} and \texttt{margin\_loss} loss for classification tasks. Further details can be found in \cref{sec:output_operators}.

\paragraph{Shape transformations,} such as \texttt{index}, \texttt{reshape}, \texttt{transpose}, \texttt{repeat}, \texttt{concat}, \texttt{padding}, and \texttt{conv\_patch}, are treated as special \emph{no-ops} nodes that manipulate the shape or layout of data without altering its numerical values. These transformations must be invertible, and PJAX implements both the forward and inverse operations. These transformations do not impose feasibility constraints themselves (i.e., they do not define sets $\sC_v$ for projection). Instead, when a projection operator $\proj_{\sC_v}$ associated with a computational node $v$ needs to access values from incoming edges or distribute values to outgoing edges, these values are automatically passed through any intermediate shape transformation nodes using their forward or inverse implementations as needed. This design allows users to incorporate complex tensor manipulations common in neural networks without requiring additional projection operators.

\subsection{User API}
Users interact with PJAX through a high-level API designed to closely resemble the JAX / NumPy interface, facilitating adoption for those already familiar with these libraries.

\paragraph{Computation container} The \texttt{Computation} class is fundamental for objects managed by PJAX. A \texttt{Computation} instance can hold input data (as an \texttt{Array} or \texttt{Parameter}) or represent the symbolic output of a PJAX operation, maintaining references to the operation and its inputs. This mechanism allows PJAX to trace the sequence of operations and construct the computation graph implicitly.

\paragraph{Data containers} include \texttt{Array} (for constant inputs) and \texttt{Parameter} (for variables to be optimized). Both are subclasses of \texttt{Computation} and act as wrappers around standard JAX arrays (\texttt{jax.numpy.ndarray}).

\paragraph{API functions} provided by PJAX operate on \texttt{Computation} objects and return new \texttt{Computation} objects, thereby extending the computation graph. This includes wrappers for the core primitive functions, output constraints, and shape transformations. Furthermore, PJAX offers functions designed to mirror the \texttt{jax.numpy} API, such as \texttt{dot} (with batching semantics), \texttt{matmul}, \texttt{swapaxes}, \texttt{moveaxis}, \texttt{expand\_dims}, \texttt{squeeze}, \texttt{stack}, etc. These higher-level functions are implemented internally using combinations of PJAX core functions and the \texttt{vmap} utility, obviating the need to define unique projection operators for each one.

\paragraph{Vectorization (\texttt{vmap})} is supported through a utility, \texttt{pjax.vmap}, with a signature and semantics analogous to \texttt{jax.vmap}. It is used internally to implement batch-aware operations (e.g., \texttt{matmul} from \texttt{dot}) and is exposed to the user, enabling automatic vectorization of user-defined functions composed of PJAX operations. This is critical for achieving high performance on modern hardware.

\subsection{Optimizer module (\texttt{pjax.optim})}
The \texttt{pjax.optim} module contains the algorithms that solve the feasibility problem defined by the computation graph. Currently available optimizers include Alternating Projections (AP), Cyclic Projections (CP), Douglas–Rachford (DR), and the Difference Map algorithm \citep{elser2003phase}. The optimizer's \texttt{update} function accepts:
\begin{itemize}
    \item A user-defined Python function that computes the desired output using PJAX operations. This function implicitly defines the computation graph and constraints. It is analogous to the function one might pass to \texttt{jax.grad}, often representing the forward pass of a model, $\vtheta \mapsto \vf(\vx, \vtheta)$, whose output is then constrained according to the graph (e.g., via Output Constraint nodes).
    \item A dictionary (or other \emph{pytree} structure) holding the current state of the \texttt{Parameter} objects (e.g., network weights).
\end{itemize}
The optimizer then applies the function to the parameters, computes and stores intermediate outputs, forms a bipartition of the graph, and performs a specified number of update steps using the chosen projection algorithm. Finally, it returns a dictionary containing the updated consensus values for the parameters.

\subsection{High-level neural network API (\texttt{pjax.nn})}
Inspired by Flax, \texttt{pjax.nn} simplifies model definition and training via a compositional \texttt{Module} interface for reusable components (e.g., layers, blocks). It automates parameter handling (initialization, naming, sharing), enabling users to define complex architectures with familiar concepts, which are readily compatible with the \texttt{pjax.optim} module for projection-based training.

\subsection{Example: MLP definition and computation graph}
To illustrate how a neural network is defined using the \texttt{pjax.nn} API and how PJAX subsequently constructs a detailed computation graph, we consider an MLP with a single hidden layer and ReLU activation. Mathematically, this MLP is defined as
\begin{equation}
    \label{eq:mlp_example_definition}
    \vf(\vx; \mW^{\text{hidden}}, \vb^{\text{hidden}}, \mW^{\text{out}}) = \mW^{\text{out}} \operatorname{ReLU} (\mW^{\text{hidden}} \vx + \vb^{\text{hidden}})
\end{equation}
where $\vx$ is the input, $\mW^{\text{hidden}}$ and $\vb^{\text{hidden}}$ are the weights and biases for the hidden layer, respectively, and $\mW^{\text{out}}$ represents the weights for the output layer.
The corresponding PJAX code for this MLP is shown in \cref{lst:mlp_example_pjax}. 

\begin{minipage}{\textwidth}
    \begin{lstlisting}[language=Python, caption={Python code for an MLP using the \texttt{pjax} API, including training loop}, label={lst:mlp_example_pjax}]
    import jax
    import pjax
    from pjax import nn, optim
    
    class MLP(nn.Module):
        def __init__(self, in_features, hidden_features, num_classes):
            super().__init__()
            self.hidden = nn.Linear(in_features, hidden_features)
            self.relu = nn.ReLU(hidden_features) 
            self.out = nn.Linear(hidden_features, num_classes)
    
        def __call__(self, x):
            x = self.hidden(x)
            x = self.relu(x)
            x = self.out(x)
            return x
    
    
    model = MLP(in_features=784, hidden_features=256, num_classes=10)
    params = model.init(jax.random.key(0))
    optimizer = optim.DouglasRachford(steps_per_update=50)
    
    for (x, y) in dataloader:
        def loss_fn(params):
            logits = model.apply(params, x)
            return nn.cross_entropy(logits, y)
    
        params, loss = optimizer.update(loss_fn, params)
    \end{lstlisting}
\end{minipage}
When this \texttt{MLP} model is instantiated and applied to an input batch (e.g., 32 MNIST vectors, each $784$-dimensional, with a hidden layer of $16$ features and $10$ output classes), PJAX traces the operations. The resulting computation graph, shown in \cref{fig:pjax_mlp_graph}, visualizes this trace.

In this graph, nodes represent PJAX components. \texttt{Array} nodes hold constant data like the input batch and target data for the \texttt{cross\_entropy} loss. Learnable parameters, whose names are assigned by our \texttt{pjax.nn} API, are represented as \texttt{Parameter} nodes; for this MLP, these include \texttt{hidden.weight} (corresponding to $\mW^{\text{hidden}}$), \texttt{relu.bias} (corresponding to $\vb^{\text{hidden}}$ and applied at the \texttt{sum\_relu} stage), and \texttt{out.weight} (corresponding to $\mW^{\text{out}}$). The primitive functions shown are \texttt{dot} (scalar dot product) and \texttt{sum\_relu}. Shape transformation nodes, specifically \texttt{reshape} and \texttt{repeat}, prepare input tensors for the batched scalar operations. For instance, to implement the batched matrix multiplication in the first linear layer ($\mW^{\text{hidden}} \vx$), the input \texttt{Array (32,784)} and the \texttt{hidden.weight (784,16)} parameter are reshaped and repeated to match dimensions for the \texttt{dot} operation.

This graph details the concrete sequence of operations as traced by PJAX. In contrast, a conceptual illustration (as in \cref{fig:dag}) would represent each neuron's compound operations (dot product and activation) as distinct explicit nodes with edges fanning out to all neurons in the subsequent layer, rather than utilizing PJAX's explicit shape transformations for vectorized scalar primitives. Another minor technical difference is that the conceptual graph does not include a node for the target data, rather, it's implicitly represented in the loss function.

\begin{figure}[htb]
    \centering
    \includegraphics[width=1.0\textwidth]{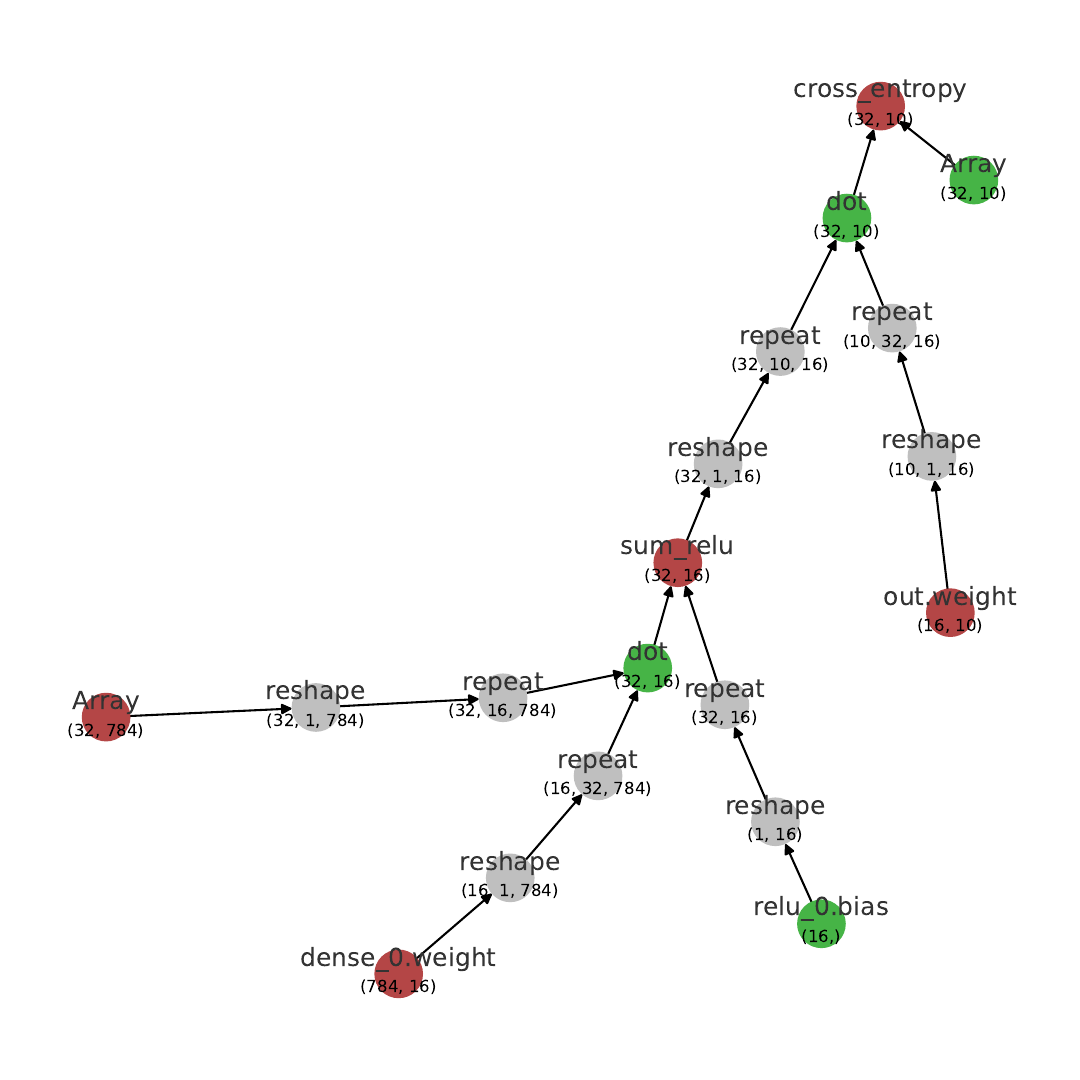}
    \caption{PJAX computation graph for the $16$-neuron hidden layer MLP (\cref{lst:mlp_example_pjax}), processing a batch of $32$ MNIST samples. Node sublabels indicate tensor shapes.}
    \label{fig:pjax_mlp_graph}
\end{figure}

\clearpage
\section{PROJECTION OPERATORS}
\label{sec:projections}
Here we provide the mathematical details underpinning the projection steps central to our method (\cref{sec:method}). These steps form the computational core of each iteration within the projection algorithms used for training (\cref{alg:training}). We begin by presenting the theorem that justifies how projections onto hidden function node constraints are computed by leveraging projections onto the graphs of the underlying primitive functions (\cref{thm:consensus-constrained-projection}). Subsequently, in \cref{sec:primitive_projections}, we detail the specific orthogonal projection operators onto the graphs ($\graph(f)$) for various primitive functions commonly used in neural networks, such as linear operations, activations, and pooling. Finally, \Cref{sec:output_operators} describes the operators employed at the output nodes to enforce conditions derived from the learning objective.

\begin{theorem}[Projection onto consensus sets]
    \label{thm:consensus-constrained-projection}
    Let $\bar{\sC} \subseteq \R^{d_x} \times \R^{d_y}$ be a non-empty closed set (e.g., the graph of a primitive function). The associated consensus set $\sC \subseteq \R^{d_x} \times (\R^{d_y})^n$ is
    \begin{equation}
        \sC = \left\{ (\vw, \vu_1, \dots, \vu_n) \mid (\vw, \vu_i) \in \bar{\sC} \forall i=1,\dots,n \right\}.
    \end{equation}
    Given a point $\vz_0 = (\vx_0, \vy_{0,1}, \dots, \vy_{0,n})$, a projection onto $\sC$ can be computed as follows:
    \begin{enumerate}
        \item Compute the average $\vy$ component: $\bar{\vy}_0 = \frac{1}{n} \sum_{i=1}^{n} \vy_{0,i}$.
        \item Choose a minimizer of the weighted projection problem for the pair $(\vx_0, \bar{\vy}_0)$ onto the base set $\bar{\sC}$:
            \begin{equation} \label{eq:weighted_proj_consensus}
            (\vx, \bar{\vy}) \in \argmin_{(\vw, \bar{\vu}) \in \bar{\sC}} \left( \norm{\vw - \vx_0}^2 + n \norm{\bar{\vu} - \bar{\vy}_0}^2 \right).
            \end{equation}
        \item A projection onto the consensus set $\sC$ is formed by replicating the $\bar{\vy}$ component:
            \begin{equation}
             (\vx, \underbrace{\bar{\vy}, \dots, \bar{\vy}}_{n \text{ times}}).
            \end{equation}
    \end{enumerate}
    This theorem justifies the projection procedure for function nodes described in \cref{sec:method}. In that context, $\bar{\sC} = \graph(f_v)$, $\vx_0$ holds incoming edge values $(z_{uv})$, $\vy_{0,i}$ hold outgoing values $z_{vw}$, and $n=|\sN^+(v)|$. Step 2 yields the updated incoming values $\vx$ (new $(z_{uv})$) and the single consensus outgoing value $\bar{\vy}$ (new $z_{\text{out}}$). Step 3 constructs the state update for $\vz$.
    
    For simplicity, our implementation uses a standard projection onto $\bar{\sC}$ in Step 2, rather than the theoretically derived weighted projection. While this alters the exact projection dynamics, any resulting fixed point still satisfies both the consensus requirement and the base constraint $(\vx, \bar{\vy}) \in \bar{\sC}$.
\end{theorem}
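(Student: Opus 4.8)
The plan is to write the projection explicitly as the constrained least-squares problem
\begin{equation}
\proj_{\sC}(\vz_0) = \argmin_{(\vw, \vu_1, \dots, \vu_n) \in \sC} \left( \norm{\vw - \vx_0}^2 + \sum_{i=1}^n \norm{\vu_i - \vy_{0,i}}^2 \right),
\end{equation}
and then to exploit the structure of $\sC$ to collapse the $n$ output copies. The crucial observation is that when $\bar{\sC}$ is the graph of a single-valued function $f$, membership $(\vw, \vu_i) \in \bar{\sC}$ forces $\vu_i = f(\vw)$; hence every feasible point of $\sC$ automatically satisfies $\vu_1 = \dots = \vu_n = f(\vw)$, and the feasible set is parametrized by $\vw$ alone. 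First I would substitute this in to obtain the single-variable problem $\min_{\vw} \norm{\vw - \vx_0}^2 + \sum_{i=1}^n \norm{f(\vw) - \vy_{0,i}}^2$.

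Next I would apply the standard mean (parallel-axis) decomposition to the output term. Writing $\bar{\vy}_0 = \frac{1}{n}\sum_i \vy_{0,i}$ and expanding $f(\vw) - \vy_{0,i} = (f(\vw) - \bar{\vy}_0) + (\bar{\vy}_0 - \vy_{0,i})$, the cross term sums to zero precisely because $\bar{\vy}_0$ is the mean, giving
\begin{equation}
\sum_{i=1}^n \norm{f(\vw) - \vy_{0,i}}^2 = n\,\norm{f(\vw) - \bar{\vy}_0}^2 + \sum_{i=1}^n \norm{\bar{\vy}_0 - \vy_{0,i}}^2,
\end{equation}
where the last sum is independent of $\vw$. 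Dropping this constant, the minimization becomes $\min_{\vw} \norm{\vw - \vx_0}^2 + n\,\norm{f(\vw) - \bar{\vy}_0}^2$, which is exactly the weighted projection of $(\vx_0, \bar{\vy}_0)$ onto $\bar{\sC}$ in Eq.~\eqref{eq:weighted_proj_consensus} (identifying $\bar{\vu} = f(\vw)$). Its minimizer $(\vx, \bar{\vy})$ therefore yields $\bar{\vy} = f(\vx)$, and reinstating the collapsed copies gives the full minimizer $(\vx, \bar{\vy}, \dots, \bar{\vy})$, matching Steps 2 and 3. The weight $n$ thus emerges naturally as the multiplicity of the shared output.

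The step I expect to be the main obstacle — and the genuine crux of the argument — is the collapse of the $n$ copies, which hinges on $\bar{\sC}$ having single-valued slices over each $\vw$ (the graph property). For a truly set-valued closed $\bar{\sC}$ this fails: the per-copy minimizers $\vu_i = \proj_{S_{\vw}}(\vy_{0,i})$, with $S_{\vw} = \{\vu : (\vw,\vu) \in \bar{\sC}\}$, need not agree, so the exact projection would not have the replicated form asserted in Step 3. Since in the application $\bar{\sC} = \graph(f_v)$ is always a function graph, this structural assumption holds and the reduction is valid; I would state single-valuedness explicitly where it is invoked. Finally, for the closing remark about substituting an unweighted projection in Step 2, I would argue separately that any fixed point $(\vx, \bar{\vy})$ of that alternative scheme still satisfies $(\vx, \bar{\vy}) \in \bar{\sC}$ with all copies equal to $\bar{\vy}$, hence lies in $\sC$ and meets both the base constraint and the consensus requirement.
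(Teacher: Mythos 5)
Your proposal is correct and follows essentially the same route as the paper's proof: collapse the $n$ output copies using the structure of $\bar{\sC}$, then apply the mean (parallel-axis) identity $\sum_{i=1}^n \norm{\bar{\vu} - \vy_{0,i}}^2 = n\norm{\bar{\vu} - \bar{\vy}_0}^2 + \text{const}$ to arrive at the weighted projection. Your explicit caveat about single-valuedness is a genuine sharpening: the paper's proof asserts that the constraint $(\vw,\vu_i)\in\bar{\sC}$ for all $i$ forces $\vu_1=\dots=\vu_n$ for \emph{any} closed $\bar{\sC}$, which is false when the slices $S_{\vw}=\{\vu : (\vw,\vu)\in\bar{\sC}\}$ are set-valued, so the hypothesis you propose to state explicitly (that $\bar{\sC}$ is a function graph, as it always is in the application) is in fact needed for the theorem as written.
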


\begin{proof}
    Let $\vz_0 = (\vx_0, \vy_{0,1}, \dots, \vy_{0,n})$. We seek a projection of $\vz_0$ onto $\sC$, i.e.\ a point $(\vx, \vy_1, \dots, \vy_n) \in \sC$ minimizing the squared Euclidean distance $f$ to $\vz_0$. We use $(\vw, \vu_1, \dots, \vu_n)$ as dummy variables for points in $\sC$
    \begin{equation}
     \min_{(\vw, \vu_1, \dots, \vu_n) \in \sC} \left( \norm{\vw - \vx_0}^2 + \sum_{i=1}^n \norm{\vu_i - \vy_{0,i}}^2 \right).
    \end{equation}
    The constraint $(\vw, \vu_i) \in \bar{\sC}$ for all $i$ implies that any feasible point must satisfy $\vu_1 = \dots = \vu_n = \bar{\vu}$ for some $\bar{\vu}$ where $(\vw, \bar{\vu}) \in \bar{\sC}$.
    The problem reduces to finding $(\vw, \bar{\vu}) \in \bar{\sC}$ that minimizes
    \begin{equation}
     f(\vw, \bar{\vu}) = \norm{\vw - \vx_0}^2 + \sum_{i=1}^n \norm{\bar{\vu} - \vy_{0,i}}^2.
    \end{equation}
    Let $\bar{\vy}_0 = \frac{1}{n} \sum_{i=1}^n \vy_{0,i}$. Using the identity $\sum_{i=1}^n \norm{a - b_i}^2 = n \norm{a - \bar{b}}^2 + \sum_{i=1}^n \norm{\bar{b} - b_i}^2$, the sum term becomes
    \begin{equation}
     \sum_{i=1}^n \norm{\bar{\vu} - \vy_{0,i}}^2 = n \norm{\bar{\vu} - \bar{\vy}_0}^2 + \sum_{i=1}^n \norm{\bar{\vy}_0 - \vy_{0,i}}^2.
    \end{equation}
    Substituting this into $f(\vw, \bar{\vu})$, the minimization problem is equivalent (since the last term is constant w.r.t. $\vw, \bar{\vu}$) to
    \begin{equation}
     \min_{(\vw, \bar{\vu}) \in \bar{\sC}} \left( \norm{\vw - \vx_0}^2 + n \norm{\bar{\vu} - \bar{\vy}_0}^2 \right).
    \end{equation}
    Any minimizer $(\vx, \bar{\vy})$ of \cref{eq:weighted_proj_consensus} yields a projection onto $\sC$.
    Such a projection is constructed by replicating $\bar{\vy}$, yielding $(\vx, \underbrace{\bar{\vy}, \dots, \bar{\vy}}_{n \text{ times}})$, as stated in step 3.
\end{proof}

\subsection{Projection operators for primitive functions}
\label{sec:primitive_projections}
Below, we detail and derive the projection operators onto the graphs of several primitive functions used within the PJAX framework. Recall that for a function $f: \R^n \to \R^m$, its graph is defined as $\graph(f) = \{ (\vx, f(\vx)) \mid \vx \in \R^n \}$. The projection onto this graph, $\proj_{\graph(f)}(\vx_0, \vy_0)$, finds the point on the graph closest to $(\vx_0, \vy_0)$. 

\begin{table}[h!]
    \centering
    \caption{Summary of primitive functions presented.}
    \label{tab:primitive_function_summary}
    \begin{tabular}{lll}
    \toprule
    Function & Definition & Theorem Reference \\
    \midrule
    Identity & $f(x) = x$ & \cref{thm:identity} \\
    Sum & $f(\vx) = \vone^\top \vx$ & \cref{thm:sum} \\
    ReLU of summation & $f(\vx) = \max\{0, \vone^\top \vx\}$ & \cref{thm:sumrelu} \\
    Dot Product & $f(\vx, \vy) = \langle \vx, \vy \rangle$ & \cref{thm:dotproduct} \\
    Maximum & $f(\vx) = \max_i \{x_i\}$ & \cref{thm:maximum} \\
    Quantization & $f(x)$ maps $x$ to nearest in $Z$ & \cref{thm:quantization} \\
    \bottomrule
    \end{tabular}
\end{table}

\begin{theorem}[Identity]
    \label{thm:identity}
    The projection operator onto the graph of the identity function $\operatorname{id}(x) = x$ ($x \in \R$) is given by an average
    \begin{equation}
        \proj_{\graph(\operatorname{id})}(x_0, y_0) = \left( \frac{x_0 + y_0}{2}, \frac{x_0 + y_0}{2} \right).
    \end{equation}
\end{theorem}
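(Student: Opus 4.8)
The plan is to recognize that the graph of the identity is simply the diagonal line $\graph(\operatorname{id}) = \{(t,t) \mid t \in \R\}$, which is a one-dimensional linear subspace of $\R^2$ spanned by the vector $(1,1)$. Since this set is nonempty, closed, and convex (indeed a subspace), the general theory in Section~\ref{subsec:proj-prox} guarantees that the projection exists and is unique, so the entire task reduces to an elementary orthogonal projection onto a line.

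First I would instantiate the defining minimization from Eq.~\eqref{eq:proj}, namely
\begin{equation}
    \proj_{\graph(\operatorname{id})}(x_0,y_0) = \argmin_{(x,y)\,:\,y=x} \bigl( (x-x_0)^2 + (y-y_0)^2 \bigr).
\end{equation}
Substituting the constraint $y=x$ reduces this to the one-variable strictly convex quadratic $g(t) = (t-x_0)^2 + (t-y_0)^2$. Setting $g'(t) = 2(t-x_0) + 2(t-y_0) = 0$ yields the unique stationary point $t^\ast = (x_0+y_0)/2$, which is the global minimizer by strict convexity. Back-substituting $x = y = t^\ast$ gives both coordinates equal to $(x_0+y_0)/2$, exactly as claimed.

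A slicker alternative would be to invoke the closed-form projection onto the line $\spanning{(1,1)}$: the orthogonal projection of $(x_0,y_0)$ is $\tfrac{\langle (x_0,y_0),(1,1)\rangle}{\norm{(1,1)}^2}(1,1) = \tfrac{x_0+y_0}{2}(1,1)$, which delivers the answer in one line. Either route is valid; I would likely present the direct optimization for self-containedness and to avoid appealing to an external projection formula.

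Honestly, there is essentially no obstacle in this statement: the only thing to get right is the bookkeeping of the one-variable optimization (or, equivalently, the projection-onto-a-line formula). It is worth remarking that the result is just the arithmetic mean of the two inputs, which is precisely the consensus/averaging behavior predicted by Theorem~\ref{thm:consensus-constrained-projection} when $\bar{\sC} = \graph(\operatorname{id})$: the incoming and outgoing edge values at an identity node are reconciled by taking their average.
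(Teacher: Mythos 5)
Your proof is correct and follows essentially the same route as the paper's: substitute the constraint $y=x$, minimize the resulting one-variable quadratic $(x-x_0)^2+(x-y_0)^2$ by setting its derivative to zero, and note uniqueness from convexity of the line. The alternative via the projection-onto-$\spanning{(1,1)}$ formula is a fine remark but does not change the substance.
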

\begin{proof}
    We seek the point $(x, x)$ on the line $y=x$ that is closest to $(x_0, y_0)$. This involves minimizing the squared distance $(x - x_0)^2 + (x - y_0)^2$. Standard calculus techniques (setting the derivative with respect to $x$ to zero) or geometric reasoning show that the minimum occurs when $x = (x_0 + y_0) / 2$. Since the graph is a line (closed and convex), the projection is unique.
\end{proof}

\begin{theorem}[Sum]
    \label{thm:sum}
    Let $\operatorname{sum}: \R^n \to \R$ be the summation function $\operatorname{sum}(\vx) = \vone^\top \vx$. The projection of $(\vx_0, y_0) \in \R^n \times \R$ onto its graph is
    \begin{equation}
        \proj_{\graph(\operatorname{sum})}(\vx_0, y_0) = \left(\vx_0 + \lambda \vone, y_0 - \lambda\right), \quad \text{where} \quad \lambda = \frac{y_0 - \vone^\top \vx_0}{n + 1}.
    \end{equation}
\end{theorem}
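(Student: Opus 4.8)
The plan is to recognize that the graph of $\operatorname{sum}$ is an affine hyperplane in $\R^{n+1}$ and to exploit the standard closed-form projection onto such a set. Writing points of $\R^n \times \R$ as $(\vx, y)$, the graph is
\begin{equation}
    \graph(\operatorname{sum}) = \{ (\vx, y) \mid y - \vone^\top \vx = 0 \},
\end{equation}
which is the hyperplane with normal vector $\va = (\vone, -1) \in \R^{n+1}$ and offset $0$. Since this set is closed and convex, Section~\ref{subsec:proj-prox} guarantees that $\proj_{\graph(\operatorname{sum})}(\vx_0, y_0)$ exists and is unique, so it suffices to exhibit one feasible point satisfying the optimality conditions.

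First I would set up the constrained minimization $\min \norm{\vx - \vx_0}^2 + (y - y_0)^2$ subject to $\vone^\top \vx - y = 0$, and attack it either by the textbook hyperplane formula $\proj(\vz_0) = \vz_0 - \frac{\va^\top \vz_0}{\norm{\va}^2}\va$ or, equivalently and more self-containedly, by a single Lagrange multiplier $\lambda$. The stationarity conditions of the Lagrangian yield that the optimal displacement is parallel to the normal: $\vx - \vx_0 = \lambda \vone$ and $y - y_0 = -\lambda$. This captures the geometric content that the projection moves $(\vx_0, y_0)$ along the direction $(\vone, -1)$.

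Next I would substitute these relations into the defining constraint $y = \vone^\top \vx$ to pin down the multiplier: inserting $\vx = \vx_0 + \lambda\vone$ and $y = y_0 - \lambda$ gives $y_0 - \lambda = \vone^\top \vx_0 + \lambda n$, and solving produces $\lambda = \frac{y_0 - \vone^\top \vx_0}{n+1}$, matching the claimed expression. Finally I would verify that the candidate indeed lies on the graph (immediate from how $\lambda$ was chosen) and invoke convexity to conclude it is the unique projection. The only step requiring minor care is bookkeeping the scaling factor $n+1$ — the $n$ contribution from $\norm{\vx}$ plus the single $+1$ from the $y$ coordinate — and the sign convention relating the multiplier to the displacement; beyond that the argument is a routine hyperplane projection with no substantive obstacle.
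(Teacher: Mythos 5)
Your proposal is correct and follows essentially the same route as the paper's proof: both identify the graph as an affine hyperplane, argue that the displacement from $(\vx_0, y_0)$ to its projection must be parallel to the normal direction $(\vone, -1)$ (the paper writes it as $(-\vone,1)$ with a compensating sign), and substitute into the constraint to solve for $\lambda = (y_0 - \vone^\top \vx_0)/(n+1)$, with uniqueness from convexity. No gaps.
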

\begin{proof}
    We seek the point $(\vx, y)$ on the hyperplane $y = \vone^\top \vx$ that minimizes the squared distance $\norm{\vx - \vx_0}^2 + (y - y_0)^2$. Geometrically, the vector connecting $(\vx_0, y_0)$ to its projection $(\vx, y)$ must be orthogonal to the hyperplane. The normal vector to the hyperplane $y - \vone^\top \vx = 0$ is $(-\vone, 1)$. Thus, $(\vx - \vx_0, y - y_0)$ must be proportional to $(-\vone, 1)$. Setting $(\vx - \vx_0, y - y_0) = -\lambda (-\vone, 1)$ gives $\vx = \vx_0 + \lambda \vone$ and $y = y_0 - \lambda$. Substituting into the hyperplane equation $y = \vone^\top \vx$ allows solving for $\lambda = (y_0 - \vone^\top \vx_0) / (n + 1)$. The graph is an affine subspace (closed and convex), guaranteeing a unique projection.
\end{proof}

\begin{theorem}[ReLU of summation]
    \label{thm:sumrelu}
    Define $\operatorname{SumReLU}(\vx) = \max\{0, \vone^\top \vx\}$ for $\vx \in \R^n$. The graph $\sC = \graph(\operatorname{SumReLU})$ consists of two parts ($\sC = \sC_1 \cup \sC_2$):
    \begin{enumerate}
        \item The ``flat'' region $\sC_1 = \{ (\vx, y) \mid \vone^\top \vx \le 0, y=0 \}$.
        \item The ``sloped'' region $\sC_2 = \{ (\vx, y) \mid \vone^\top \vx \ge 0, y=\vone^\top \vx \}$.
    \end{enumerate}
    The projection of a point $(\vx_0, y_0) \in \R^n \times \R$ onto the graph is found by projecting onto these two regions and selecting the candidate closest to $(\vx_0, y_0)$:
    \begin{enumerate}
        \item Candidate 1, $(\vx^{(1)}, y^{(1)}) = \proj_{\sC_1}(\vx_0, y_0)$, is given by
        \begin{equation}
            \vx^{(1)} = \vx_0 - \max\left\{0, \frac{\vone^\top \vx_0}{n}\right\} \vone, \qquad y^{(1)} = 0.
        \end{equation}
        \item Candidate 2, $(\vx^{(2)}, y^{(2)}) = \proj_{\sC_2}(\vx_0, y_0)$, is computed by first projecting onto the hyperplane $y=\vone^\top \vx$:
        \begin{equation}
            (\hat{\vx}, \hat{y}) = (\vx_0 + \lambda\vone, y_0 - \lambda), \quad \text{where } \lambda = \frac{y_0 - \vone^\top \vx_0}{n+1}.
        \end{equation}
        Then, the projection onto $\sC_2$ is determined by whether $(\hat{\vx}, \hat{y})$ satisfies the non-negativity constraint
        \begin{equation}
            (\vx^{(2)}, y^{(2)}) = \begin{cases} 
                (\hat{\vx}, \hat{y}) & \text{if } \vone^\top \hat{\vx} \ge 0 \\ 
                \left( \vx_0 - \frac{\vone^\top \vx_0}{n} \vone, 0 \right) & \text{otherwise.}
            \end{cases}
        \end{equation}
    \end{enumerate}
    Finally, select
    \begin{equation}
        \proj_{\graph(\operatorname{SumReLU})}(\vx_0, y_0) = \argmin_{(\vx, y) \in \{(\vx^{(1)}, y^{(1)}), (\vx^{(2)}, y^{(2)})\}} \norm{(\vx, y) - (\vx_0, y_0)}^2.
    \end{equation}
\end{theorem}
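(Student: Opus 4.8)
The plan is to exploit the fact that the graph $\sC = \graph(\operatorname{SumReLU})$ is a \emph{union} of the two closed, convex pieces $\sC_1$ and $\sC_2$, and that projecting onto a union reduces to projecting onto each piece separately and keeping whichever candidate is nearer. First I would verify the decomposition $\sC = \sC_1 \cup \sC_2$: a point $(\vx, y)$ lies on the graph iff $y = \max\{0, \vone^\top \vx\}$, which forces $y = 0$ when $\vone^\top \vx \le 0$ (giving $\sC_1$) and $y = \vone^\top \vx$ when $\vone^\top \vx \ge 0$ (giving $\sC_2$), with the two pieces meeting on $\{\vone^\top \vx = 0,\, y = 0\}$. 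Since both pieces are closed, $\dist((\vx_0, y_0), \sC)^2 = \min\{\dist((\vx_0, y_0), \sC_1)^2,\, \dist((\vx_0, y_0), \sC_2)^2\}$, so a global nearest point is attained by the closer of $\proj_{\sC_1}(\vx_0, y_0)$ and $\proj_{\sC_2}(\vx_0, y_0)$; this justifies the final $\argmin$ selection and also explains why $\proj_{\graph(\operatorname{SumReLU})}$ may be set-valued when the two candidates are equidistant.

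Next I would compute the two candidate projections. For $\sC_1 = \{y = 0\} \cap \{\vone^\top \vx \le 0\}$, the $y$-coordinate is pinned to $0$ and the $\vx$-part is an orthogonal projection onto the halfspace $\{\vone^\top \vx \le 0\}$: if $\vone^\top \vx_0 \le 0$ the point is already feasible and $\vx^{(1)} = \vx_0$, otherwise one subtracts the hyperplane correction $\tfrac{\vone^\top \vx_0}{n}\vone$ (using $\norm{\vone}^2 = n$); the two cases combine into the stated formula through the $\max\{0, \cdot\}$. For $\sC_2$, I would first project onto the full hyperplane $H = \{y = \vone^\top \vx\} = \graph(\operatorname{sum})$, invoking Theorem~\ref{thm:sum} verbatim to obtain $(\hat{\vx}, \hat{y})$ with the stated $\lambda$.

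The main obstacle is justifying the two-case formula for $\proj_{\sC_2}$, namely that it suffices to project onto $H$ and then, only if the constraint $\vone^\top \hat{\vx} \ge 0$ fails, project onto the boundary $B = \{\vone^\top \vx = 0,\, y = 0\}$. The clean way is a nested-projection (Pythagorean) argument: because $\sC_2 \subseteq H$ and the residual $(\vx_0, y_0) - (\hat{\vx}, \hat{y})$ is orthogonal to $H$, every $q \in \sC_2$ satisfies $\norm{(\vx_0, y_0) - q}^2 = \norm{(\vx_0, y_0) - (\hat{\vx}, \hat{y})}^2 + \norm{(\hat{\vx}, \hat{y}) - q}^2$, whence $\proj_{\sC_2}(\vx_0, y_0) = \proj_{\sC_2}(\hat{\vx}, \hat{y})$. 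Within the affine space $H$, the set $\sC_2$ is a halfspace with relative boundary $B$, so if $(\hat{\vx}, \hat{y}) \in \sC_2$ it is its own projection, and otherwise the nearest point of $\sC_2$ lies on $B$. Applying the same Pythagorean identity to $B \subseteq H$ shows this boundary projection can be computed directly from $(\vx_0, y_0)$, and a one-line constrained minimization (with $y = 0$ and $\vone^\top \vx = 0$) yields $(\vx_0 - \tfrac{\vone^\top \vx_0}{n}\vone,\, 0)$, matching the claim. Closedness and convexity of $\sC_1$ and $\sC_2$ guarantee each candidate projection is unique, so the only remaining care is the tie-breaking already noted in the union step.
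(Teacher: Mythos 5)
Your proposal is correct and follows essentially the same route as the paper's proof: decompose the graph as the union of the two closed convex pieces $\sC_1$ and $\sC_2$, project onto each, and keep the nearer candidate. The only difference is that you actually carry out the two constrained minimizations (using separability for $\sC_1$ and a nested-projection Pythagorean argument for $\sC_2$) where the paper simply defers to ``the method of Lagrange multipliers,'' so your write-up is a faithful, more detailed version of the same argument.
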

\begin{proof}
    The graph $\mathcal{C} = \graph(\operatorname{SumReLU})$ is the union of two closed convex sets, $\mathcal{C}_1$ and $\mathcal{C}_2$. The projection $\proj_{\sC}(\vx_0, y_0)$ is therefore the point closer to $(\vx_0, y_0)$ among the projections onto $\sC_1$ and $\sC_2$. These projections minimize the squared Euclidean distance from $(\mathbf{x}_0, y_0)$ subject to the constraints defining each set. Solving this minimization, for instance using the method of Lagrange multipliers, yields the formulas presented in the theorem.
\end{proof}

\begin{theorem}[Dot product]
\label{thm:dotproduct}
    Let $\operatorname{dot}(\vx, \vy) = \langle \vx, \vy \rangle$ for $\vx, \vy \in \R^n$. For a point $(\vx_0, \vy_0, z_0) \in \R^n \times \R^n \times \R$ with $\vx_0 \neq \pm \vy_0$, the orthogonal projection onto the graph $\graph(\operatorname{dot})$ is unique and given by
    \begin{equation}
        \proj_{\graph(\operatorname{dot})}(\vx_0, \vy_0, z_0) = \left( \vx(\lambda), \vy(\lambda), z_0 - \lambda \right),
    \end{equation}
    where
    \begin{equation}
        \vx(\lambda) = \frac{\vx_0 + \lambda \vy_0}{1 - \lambda^2}, \quad \vy(\lambda) = \frac{\vy_0 + \lambda \vx_0}{1 - \lambda^2}.
    \end{equation}
    The scalar parameter $\lambda$ is the unique root in $]-1, 1[$ of the function
    \begin{equation} \label{eq:dot_product_lambda_func}
        f(\lambda) = \frac{(1 + \lambda^2) p + \lambda q}{(1 - \lambda^2)^2} - z_0 + \lambda = 0,
    \end{equation}
    with $p = \langle \vx_0, \vy_0 \rangle$ and $q = \norm{\vx_0}^2 + \norm{\vy_0}^2$.
    This root can be efficiently found using Newton's method. Practically, 5 to 10 iterations starting from $\lambda=0$ are sufficient to converge to a solution with high accuracy.
\end{theorem}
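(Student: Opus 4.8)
The plan is to realize the projection as the equality-constrained least-squares problem
\[
\min_{\vx,\vy,z}\ \norm{\vx-\vx_0}^2+\norm{\vy-\vy_0}^2+(z-z_0)^2\quad\text{subject to}\quad z=\langle\vx,\vy\rangle,
\]
and to read off the stated formulas from its first-order conditions. A minimizer exists since the objective is coercive on the closed constraint surface. Introducing a multiplier $\lambda$ and differentiating the Lagrangian gives $\vx-\vx_0=\lambda\vy$, $\vy-\vy_0=\lambda\vx$, and $z=z_0-\lambda$; solving the first two linear relations (possible exactly when $\lambda^2\neq1$) produces $\vx(\lambda)$ and $\vy(\lambda)$ as stated, and substituting into $z=\langle\vx,\vy\rangle$ yields $f(\lambda)=0$ after expanding $\langle\vx_0+\lambda\vy_0,\vy_0+\lambda\vx_0\rangle=(1+\lambda^2)p+\lambda q$. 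I would also note that $\lambda=1$ forces $\vx_0=-\vy_0$ and $\lambda=-1$ forces $\vx_0=\vy_0$, so under the hypothesis $\vx_0\neq\pm\vy_0$ every stationary point is captured by this parametrization.

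The heart of the matter, namely that the relevant root lies in $]-1,1[$, is unique, and is the global minimizer, becomes transparent after the distance-preserving change of variables $\vu=(\vx+\vy)/\sqrt2$, $\vv=(\vx-\vy)/\sqrt2$ (with $\vu_0,\vv_0$ defined analogously). This turns the constraint into the separable form $2z=\norm{\vu}^2-\norm{\vv}^2$ and makes $\vu_0=\vzero\iff\vx_0=-\vy_0$ and $\vv_0=\vzero\iff\vx_0=\vy_0$; thus under the hypothesis $a:=\norm{\vu_0}>0$ and $b:=\norm{\vv_0}>0$. Since the objective sees $\vu,\vv$ only through their distances to $\vu_0,\vv_0$ while the constraint sees them only through their norms, the optimal $\vu,\vv$ align with $\vu_0,\vv_0$, reducing everything to two scalars $s=\norm{\vu}\geq0$, $r=\norm{\vv}\geq0$ minimizing $(s-a)^2+(r-b)^2+\bigl(\tfrac12(s^2-r^2)-z_0\bigr)^2$.

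In this reduced problem the minimizer is interior: on the face $s=0$ the $s$-derivative equals $-2a<0$, so moving inward strictly decreases the objective, and similarly for $r$. Interior stationarity gives $s(1-\lambda)=a$ and $r(1+\lambda)=b$ (writing $\mu=z-z_0=-\lambda$), and with $a,b>0$ and $s,r>0$ this forces $\lambda\in\,]-1,1[$. Substituting $s,r$ back recovers $f(\lambda)=0$, and in these coordinates $f'(\lambda)=a^2/(1-\lambda)^3+b^2/(1+\lambda)^3+1>0$ on $]-1,1[$, so $f$ is strictly increasing there; combined with $f(\lambda)\to+\infty$ as $\lambda\to1^-$ (driven by $a>0$) and $f(\lambda)\to-\infty$ as $\lambda\to-1^+$ (driven by $b>0$), this yields exactly one root, which determines $s,r$ and hence $(\vx,\vy,z)$ uniquely. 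Thus the projection is unique and given by the formulas, and Newton's method from $\lambda=0$ converges to this root.

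I expect the main obstacle to be precisely the global-optimality and uniqueness claim: the graph is nonconvex, so a priori the projection could be set-valued and $f$ could have roots on several branches. The decisive step is the orthogonal change of variables, which simultaneously localizes the minimizer to $]-1,1[$ (through the sign constraints $s,r\geq0$ together with $a,b>0$) and trivializes the monotonicity of $f$; it also explains why the excluded cases $\vx_0=\pm\vy_0$ are genuinely degenerate, since there one of $a,b$ vanishes and the corresponding one-sided blow-up of $f$ that anchors the root disappears.
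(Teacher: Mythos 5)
Your proof is correct, and on the step that carries the real weight it is more self-contained than the paper's. The first half of your argument — existence by coercivity, the stationarity conditions $\vx=\vx_0+\lambda\vy$, $\vy=\vy_0+\lambda\vx$ with $\lambda=z_0-\langle\vx,\vy\rangle$, solving the linear system when $\lambda^2\neq 1$, and substituting back to get $f(\lambda)=0$ — coincides with the paper's proof. Where you diverge is on the claim that $f$ has a unique root in $]-1,1[$ and that this root yields the (unique, global) projection: the paper does not prove this, but instead declares it "analogous to the analysis for projections onto hyperbolas" and cites Bauschke et al.\ (2022). Your orthogonal change of variables $\vu=(\vx+\vy)/\sqrt2$, $\vv=(\vx-\vy)/\sqrt2$ makes that analogy literal by turning $\graph(\operatorname{dot})$ into the set $2z=\norm{\vu}^2-\norm{\vv}^2$, and the ensuing steps — alignment of the optimal $\vu,\vv$ with $\vu_0,\vv_0$, the boundary derivative check forcing $s,r>0$, the identity $f'(\lambda)=a^2/(1-\lambda)^3+b^2/(1+\lambda)^3+1>0$, and the one-sided blow-ups at $\lambda\to\pm1$ — constitute a complete elementary proof of uniqueness that also explains why $\vx_0\neq\pm\vy_0$ is exactly the right hypothesis (one of $a,b$ vanishing kills the corresponding blow-up). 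The bookkeeping checks out: with $a^2=\tfrac12(q+2p)$ and $b^2=\tfrac12(q-2p)$ your form of $f$ reduces to the stated one. The paper's version buys brevity by outsourcing the key step; yours buys a self-contained justification of the theorem's central uniqueness claim.
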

\begin{proof}
    We seek the point $(\vx, \vy, z)$ on the graph $z = \langle \vx, \vy \rangle$ that minimizes the squared Euclidean distance $\norm{\vx - \vx_0}^2 + \norm{\vy - \vy_0}^2 + (z - z_0)^2$. This is equivalent to minimizing the unconstrained function
    \begin{equation}
        g(\vx, \vy) = \norm{\vx - \vx_0}^2 + \norm{\vy - \vy_0}^2 + (\langle \vx, \vy \rangle - z_0)^2
    \end{equation}
    over $(\vx, \vy) \in \R^n \times \R^n$. Existence of a minimum is guaranteed as $g$ is continuous and coercive. The first-order optimality conditions are $\nabla_{\vx} g = \vzero$ and $\nabla_{\vy} g = \vzero$. Let $\lambda = z_0 - \langle \vx, \vy \rangle$. The conditions simplify to
    \begin{equation}
        \vx = \vx_0 + \lambda \vy \quad \text{and} \quad \vy = \vy_0 + \lambda \vx.
    \end{equation}
    Solving this linear system for $\vx$ and $\vy$ (assuming $\lambda^2 \neq 1$, which holds for the minimum when $\vx_0 \neq \pm \vy_0$) yields
    \begin{equation}
        \vx(\lambda) = \frac{\vx_0 + \lambda \vy_0}{1 - \lambda^2}, \quad \vy(\lambda) = \frac{\vy_0 + \lambda \vx_0}{1 - \lambda^2}.
    \end{equation}
    Substituting these back into the definition $\lambda = z_0 - \langle \vx(\lambda), \vy(\lambda) \rangle$ and simplifying leads to the condition $f(\lambda) = 0$ as defined in \cref{eq:dot_product_lambda_func}, where $p = \langle \vx_0, \vy_0 \rangle$ and $q = \norm{\vx_0}^2 + \norm{\vy_0}^2$.
    Analogous to the analysis for projections onto hyperbolas, the function $f(\lambda)$ has a unique root in $]-1, 1[$ when $\vx_0 \neq \pm \vy_0$, ensuring a unique projection~\citep{bauschke2022projecting}. 
    The $z$ component of the projection is $z = \langle \vx(\lambda), \vy(\lambda) \rangle = z_0 - \lambda$.
\end{proof}

\begin{theorem}[Maximum]
\label{thm:maximum}
    Let $\operatorname{max}: \R^n \to \R$ be the maximum function $\operatorname{max}(\vx) = \max_i \{x_i\}$. To project $(\vx_0, y_0)$ onto the graph $y = \operatorname{max}(\vx)$, first sort $\vx_0$ such that $x_{0,1} \le \ldots \le x_{0,n}$ (and note the permutation used).
    Then, for $k = 1, \ldots, n$, generate candidate points $(\vx^{(k)}, y^{(k)})$ based on the hypothesis that the maximum $y$ is achieved by components $x_k, \ldots, x_n$:
    \begin{equation}
        y^{(k)} = \frac{\left(\sum_{i=k}^n x_{0,i}\right) + y_0}{n - k + 2}
        \qquad \text{and} \qquad
        \vx^{(k)}_i = \begin{cases}
            y^{(k)}, & \text{if } i \ge k \\
            x_{0,i}, & \text{if } i < k
        \end{cases}
    \end{equation}
    A candidate $(\vx^{(k)}, y^{(k)})$ is \emph{valid} if $\max(\vx^{(k)}) = y^{(k)}$. Given the sorted input, this simplifies to checking $x_{0, k-1} \le y^{(k)}$ (for $k>1$; $k=1$ is always valid).
    A projection is found by selecting a valid candidate $(\vx^{(k)}, y^{(k)})$ that minimizes the distance $\norm{(\vx^{(k)}, y^{(k)}) - (\vx_0, y_0)}^2$. The final vector $\vx$ must be permuted back to the original order.
\end{theorem}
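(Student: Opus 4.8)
The plan is to exploit the fact that $\graph(\operatorname{max})$, though non-convex, is a finite union of closed convex polyhedral pieces $\sF_j = \{(\vx,y) : y = x_j,\ x_i \le x_j \text{ for all } i\}$, one for each coordinate $j$ that attains the maximum. A minimizer exists since the graph is closed, and the projection onto the whole graph is the closest of the projections onto these pieces. The task thus reduces to (i) determining which piece—equivalently, which \emph{active set} $A = \{i : x_i^\ast = y^\ast\}$ of coordinates tied at the maximum—can occur at the optimum, and (ii) computing the optimal point once $A$ is fixed. The theorem's enumeration over $k$ corresponds exactly to taking $A = \{k, \ldots, n\}$ after sorting, so the crux is to show that the active set at any minimizer is a \emph{suffix} of the sorted input.

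First I would establish this suffix structure via a rearrangement argument, which I expect to be the main obstacle. Let $(\vx^\ast, y^\ast)$ be a minimizer and suppose $x_{0,i} \le x_{0,j}$ but $x_i^\ast > x_j^\ast$ for some pair $i,j$. Swapping the $i$- and $j$-entries of $\vx^\ast$ leaves $\operatorname{max}(\vx^\ast)=y^\ast$ unchanged, so the swapped point stays on the graph, while the change in squared distance to $\vx_0$ equals $2(x_i^\ast - x_j^\ast)(x_{0,i} - x_{0,j}) \le 0$. Hence the swapped point is no farther (and strictly closer when $x_{0,i} < x_{0,j}$), so some minimizer is order-preserving: $x_{0,i}\le x_{0,j} \Rightarrow x_i^\ast \le x_j^\ast$. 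Consequently the coordinates attaining the maximal value $y^\ast$ are precisely those with the largest inputs, i.e. a suffix $\{k,\ldots,n\}$ in sorted order.

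Next, with the active set fixed to $A = \{k,\ldots,n\}$, I would solve the resulting convex subproblem explicitly. On this piece the constraints are $x_i = y$ for $i \ge k$ and $x_i \le y$ for $i < k$; the objective separates, so each inactive coordinate is set to its unconstrained optimum $x_i = x_{0,i}$ (feasible precisely when $x_{0,i} \le y$), and $y$ minimizes the one-dimensional quadratic $\sum_{i=k}^n (y - x_{0,i})^2 + (y - y_0)^2$. Its first-order condition gives the stated value
\begin{equation}
    y^{(k)} = \frac{\left(\sum_{i=k}^n x_{0,i}\right) + y_0}{n - k + 2},
\end{equation}
and back-substitution yields $\vx^{(k)}$. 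Feasibility of this clamped solution requires exactly $x_{0,i} \le y^{(k)}$ for all $i < k$, which by the sorting reduces to the single check $x_{0,k-1} \le y^{(k)}$ (vacuous for $k=1$); this is the validity condition, and it is equivalent to $\operatorname{max}(\vx^{(k)}) = y^{(k)}$, so a valid candidate is genuinely a point of the graph.

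Finally I would assemble the pieces: by the suffix lemma every minimizer lies on some piece $A=\{k,\ldots,n\}$, and for that $k$ the candidate is valid and realizes that piece's (unique) projection, so the global projection is the distance-minimizing valid candidate, as claimed. For uniqueness I would note that each convex piece admits a unique projection, and that when the entries of $\vx_0$ are distinct the rearrangement inequality above is strict, forcing a unique order-preserving minimizer; ties among candidate distances occur only on a measure-zero set, on which any minimizer may be returned. A concluding permutation restores $\vx$ to the original coordinate order, completing the argument.
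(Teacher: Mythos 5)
Your proposal is correct and follows the same computational skeleton as the paper's proof (enumerate active sets that are suffixes of the sorted input, solve the restricted quadratic to get $y^{(k)}$, check validity, pick the nearest valid candidate), but it supplies two things the paper's argument is missing. First, the paper simply asserts that the $n$ suffix candidates ``cover all possible active set structures based on optimality conditions''; your rearrangement lemma --- swapping $x_i^\ast$ and $x_j^\ast$ changes the squared distance by $2(x_i^\ast - x_j^\ast)(x_{0,i}-x_{0,j})\le 0$ while preserving membership in the graph --- is exactly the missing justification that some minimizer is order-preserving and hence has a suffix active set. Second, the paper's proof claims the graph of $\max$ is convex ``as max is a convex function'' and deduces uniqueness; this is false (convexity of a function gives a convex \emph{epigraph}, not graph: for $n=2$ the points $(1,0,1)$ and $(0,1,1)$ lie on the graph but their midpoint does not), and indeed uniqueness genuinely fails, e.g.\ for $(\vx_0,y_0)=((a,a),y_0)$ with $y_0>a$ the two single-coordinate candidates $((a,\tfrac{a+y_0}{2}),\tfrac{a+y_0}{2})$ and $((\tfrac{a+y_0}{2},a),\tfrac{a+y_0}{2})$ are equidistant and both strictly closer than the ridge candidate. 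Your treatment of the graph as a non-convex union of convex polyhedral pieces, with uniqueness only generic and ties resolved arbitrarily (consistent with the paper's convention for projections onto non-convex sets), is the correct reading; the theorem's phrase ``the unique projection'' should really be weakened accordingly. The only minor looseness in your write-up is the word ``clamped'' for the per-piece solution --- when the candidate is invalid it is not the projection onto that piece at all --- but your assembly step does not rely on this, since the true minimizer's own suffix always yields a valid candidate coinciding with it.
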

\begin{proof}
    We minimize the squared distance $f(\vx, y) = \norm{\vx - \vx_0}^2 + (y - y_0)^2$ subject to $y = \max_i x_i$. The graph of the maximum function is closed, so a projection exists.
    Let $(\vx, y)$ be a projection, and let $I = \{i \mid x_i = y\}$ be the non-empty set of indices achieving the maximum. For $i \notin I$, optimality forces $x_i = x_{0,i}$; otherwise one could move $x_i$ toward $x_{0,i}$ while keeping $x_i \le y$ and decrease $f$. Hence $x_{0,i} \le y$ for all $i \notin I$.
    After sorting $\vx_0$, if there exist indices $i < j$ with $i \in I$ and $j \notin I$, then replacing $(x_i, x_j) = (y, x_{0,j})$ by $(x_{0,i}, y)$ preserves feasibility and does not increase the objective, since $x_{0,i} \le x_{0,j} \le y$. Repeating this exchange yields a projection whose active set is of the form $I = \{k, \dots, n\}$ for some $k \in \{1, \dots, n\}$.
    For such a suffix active set, we have $x_i = y$ for $i \ge k$ and $x_i = x_{0,i}$ for $i < k$. The objective thus reduces to minimizing
    \begin{equation}
        g(y) = \sum_{i=k}^n (x_{0,i} - y)^2 + (y_0 - y)^2
    \end{equation}
    with respect to $y$. Setting the derivative $g'(y)=0$ gives the value $y=y^{(k)}$ defined in the theorem, with corresponding candidate $\vx^{(k)}$.
    This candidate assumes the structure holds, which requires the validity check $x_{0, k-1} \le y^{(k)}$ (for $k>1$).
    These $n$ candidates cover all possible active sets of the form $I = \{k, \dots, n\}$ for a projection, so any valid candidate minimizing $f$ is a projection. The $k=1$ candidate is always valid, ensuring the candidate list is non-empty.
\end{proof}

\begin{theorem}[Quantization]
    \label{thm:quantization}
    Let $\operatorname{quant}: \R \to \R$ be the quantization function that maps a real number $x$ to the nearest point in a set $Z = \{z_1, z_2, \ldots, z_k\}$ of $k \ge 2$ equidistant points within $[-\alpha, \alpha]$, where $\alpha > 0$. Specifically, the points $z_i$ are given by
    \begin{equation}
        z_i = -\alpha + (i-1) \frac{2\alpha}{k-1} \quad \text{for } i=1, \dots, k.
    \end{equation}
    The function partitions the real line into intervals $I_i$ such that $\operatorname{quant}(x) = z_i$ for $x \in I_i$. These intervals are defined by the $k-1$ midpoints $m_i = (z_i + z_{i+1})/2$ for $i=1, \ldots, k-1$:
    \begin{equation}
        I_1 = (-\infty, m_1]
    \end{equation}
    \begin{equation}
        I_i = (m_{i-1}, m_i] \quad \text{for } i = 2, \ldots, k-1
    \end{equation}
    \begin{equation}
        I_k = (m_{k-1}, \infty)
    \end{equation}
    The graph of the quantization function is the union of horizontal line segments and rays
    \begin{equation}
        \graph(\operatorname{quant}) = \bigcup_{i=1}^{k} (I_i \times \{z_i\})
    \end{equation}
    This set is generally non-convex.

    To project a point $(x_0, y_0) \in \R \times \R$ onto this graph, first compute $k$ candidate points $(x^{(i)}, y^{(i)})$ by projecting $x_0$ onto each interval $I_i$ and keeping the corresponding $y$ value $z_i$:
    \begin{equation}
        (x^{(i)}, y^{(i)}) = (\proj_{I_i}(x_0), z_i) \qquad \text{for } i = 1, \ldots, k.
    \end{equation}
    The projection $\proj_{\graph(\operatorname{quant})}(x_0, y_0)$ is then given by the candidate $(x^{(i)}, y^{(i)})$ that is closest to $(x_0, y_0)$:
    \begin{equation}
        \proj_{\graph(\operatorname{quant})}(x_0, y_0) = \argmin_{i \in \{1,\ldots,k\}} \norm{(x^{(i)}, y^{(i)}) - (x_0, y_0)}^2.
    \end{equation}
    Note that since the graph is non-convex, the minimum distance might be achieved by multiple candidates; the $\argmin$ selects one such point (consistent with \cref{eq:proj}).
\end{theorem}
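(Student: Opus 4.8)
The plan is to exploit the two-level structure of the graph: it is a finite union of horizontal pieces, and within each piece the $y$-coordinate is frozen, so the distance minimization decouples. First I would invoke the elementary fact that projection onto a finite union decomposes: for closed sets $S_1,\dots,S_k$ and any point $p$, one has $\dist\bigl(p,\bigcup_i S_i\bigr)=\min_i \dist(p,S_i)$, so a minimizer over the union is a closest point among the per-set minimizers $\proj_{S_i}(p)$. Applying this with $S_i = \bar{I}_i \times \{z_i\}$ (the closed segment or ray carrying the value $z_i$) reduces the problem to $k$ independent single-piece projections.

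Second, within piece $i$ I would exploit the decoupling: for a point $(x,z_i)$ with $x\in I_i$, the squared distance to $(x_0,y_0)$ is $(x-x_0)^2 + (z_i - y_0)^2$, whose second term is constant in $x$. Minimizing the first term over the interval therefore yields $x^{(i)} = \proj_{I_i}(x_0)$, i.e. the clamping of $x_0$ into $I_i$ — exactly the candidate in the statement. Combining this with the union principle, the global minimizer is the candidate $(x^{(i)}, z_i)$ minimizing $\norm{(x^{(i)}, z_i) - (x_0,y_0)}^2$, which is the claimed $\argmin$.

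The main obstacle — and the only place requiring care — is that the intervals $I_i$ are half-open, so $\graph(\operatorname{quant})$ is not closed and $\proj_{I_i}(x_0)$ need not be attained when $x_0$ sits at an open endpoint (a midpoint $m_{i-1}$). Here I would argue that passing to the closures $\bar{I}_i$ is harmless: every point lying on an open boundary $m_j$ of one piece is simultaneously the closed endpoint of the adjacent piece carrying value $z_j$, so any distance value $\dist(p,S_i)$ realized on a closure is also realized by an actual graph point. Hence the minimum over the closed union equals the infimum over the graph and is attained on the graph itself, so the formula returns a genuine point of $\graph(\operatorname{quant})$. Finally I would note that, the graph being non-convex, several candidates can tie (a point equidistant from two levels), in which case $\argmin$ selects one arbitrarily, consistent with the set-valued convention of Eq.~\eqref{eq:proj}; uniqueness is neither claimed nor needed.
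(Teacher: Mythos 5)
Your core argument---decompose the projection over the finite union of horizontal pieces, then observe that within each piece the $y$-coordinate is frozen so the minimization reduces to clamping $x_0$ into $I_i$---is exactly the paper's proof, and that part is fine. Where you go beyond the paper is the discussion of the half-open endpoints, and there your patch does not actually work. The closure of the piece $I_i \times \{z_i\}$ (for $i \ge 2$) adds the point $(m_{i-1}, z_i)$, whereas the adjacent piece contains $(m_{i-1}, z_{i-1})$---a point at a \emph{different height}. You have conflated the $x$-value $m_{i-1}$ (which does belong to the neighboring interval) with the planar point $(m_{i-1}, z_i)$ (which belongs to no piece). Consequently a distance attained only at the added closure point is in general \emph{not} realized by any genuine graph point. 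Concretely, take $k=2$, $\alpha=1$, so $z_1=-1$, $z_2=1$, $m_1=0$, and let $(x_0,y_0)=(-\tfrac{1}{2},\,1)$. The infimum of the distance to $\graph(\operatorname{quant})$ is $\tfrac{1}{2}$, approached along $(x,1)$ as $x\to 0^+$, but never attained, since $(0,1)$ is not on the graph ($\operatorname{quant}(0)=z_1=-1$). The strict projection does not exist, and the candidate procedure returns the off-graph point $(0,1)$; your claim that ``the minimum over the closed union equals the infimum over the graph and is attained on the graph itself'' is therefore false.

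To be fair, the paper's own proof sweeps the same issue under the rug by asserting that the pieces are closed (they are not, for $i\ge 2$). The honest fix is either to note that the discrepancy occurs only when the winning candidate lands exactly on a midpoint at the ``wrong'' level---a measure-zero set of inputs, irrelevant in floating point---or to replace $\graph(\operatorname{quant})$ by the closed set $\bigcup_i [m_{i-1},m_i]\times\{z_i\}$ (with $m_0=-\infty$, $m_k=+\infty$), for which the stated candidate enumeration is exactly correct; this set is no longer the graph of a single-valued function, but it is the natural constraint set for the projection algorithm. Your instinct to flag the endpoint issue was right; the resolution you offered is the step that fails.
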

\begin{proof}
    We seek $(x, y) \in \graph(\operatorname{quant})$ that minimizes the squared distance $\norm{(x, y) - (x_0, y_0)}^2$. The graph is the union of closed, non-convex pieces, hence a minimum distance exists, but the projection is not necessarily unique.

    The minimizing point $(x, y)$ must belong to some piece $\sC_j = I_j \times \{z_j\}$. Consider the minimization restricted to an arbitrary but fixed piece $\sC_i = I_i \times \{z_i\}$. A point $(x, z_i)$ on this piece minimizes $\norm{(x, z_i) - (x_0, y_0)}^2$ subject to $x \in I_i$. This is equivalent to finding $x \in I_i$ that minimizes $(x - x_0)^2$, whose solution is $x = \proj_{I_i}(x_0)$.
    This identifies the candidate $(x^{(i)}, y^{(i)}) = (\proj_{I_i}(x_0), z_i)$ as the closest point on $\sC_i$ to $(x_0, y_0)$.

    The overall projection is the candidate $(x^{(j)}, y^{(j)})$ that yields the minimum squared distance $\norm{(x^{(i)}, y^{(i)}) - (x_0, y_0)}^2$ among all $i \in \{1, \ldots, k\}$, as stated in the theorem.
\end{proof}

\subsection{Output operators}
\label{sec:output_operators}
This subsection describes operators used at the output nodes (\cref{sec:method}) to enforce conditions derived from the task's loss function. We provide the projection operators for the margin loss constraint and the proximal operator for the cross-entropy loss function.

\begin{theorem}[Margin loss constraint]
    \label{thm:margin-loss}
    Consider a classification setting where the goal is to enforce a condition on a single logit output $x \in \R$ based on a label $y \in \R$. The margin loss constraint requires the logit to be non-positive for negative labels and greater than or equal to a positive margin $m > 0$ for positive labels.
    Let the constraint set $\sC_{y, m}$ be defined as
    \begin{equation}
        \sC_{y, m} = \begin{cases}
            (-\infty, 0] & \text{if } y \le 0 \\
            [m, \infty) & \text{otherwise.}
        \end{cases}
    \end{equation}
    The orthogonal projection of a point $x_0 \in \R$ onto this set is given by
    \begin{equation}
        \proj_{\sC_{y, m}}(x_0) = \begin{cases}
            \min(x_0, 0) & \text{if } y \le 0 \\
            \max(x_0, m) & \text{otherwise.}
        \end{cases}
    \end{equation}
\end{theorem}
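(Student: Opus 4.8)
The plan is to observe that in each of the two cases the constraint set $\sC_{y,m}$ is a closed half-line in $\R$, hence a non-empty closed convex set, so by the discussion in Section~\ref{subsec:proj-prox} the projection exists and is unique. It therefore suffices to compute, in each case, the minimizer of the squared distance $(x - x_0)^2$ over the relevant half-line.

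First I would handle the case $y \le 0$, where $\sC_{y,m} = (-\infty, 0]$. The objective $(x - x_0)^2$ is strictly convex with unconstrained minimizer $x = x_0$. If $x_0 \le 0$, this point already lies in $\sC_{y,m}$ and is therefore the projection. If $x_0 > 0$, the objective is strictly increasing on $[0, \infty)$ and strictly decreasing on $(-\infty, 0]$ approached from the left, so the constrained minimum over $(-\infty, 0]$ is attained at the boundary point $x = 0$. These two subcases combine into $\proj_{\sC_{y,m}}(x_0) = \min(x_0, 0)$.

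The case $y > 0$ is entirely symmetric, with $\sC_{y,m} = [m, \infty)$: the unconstrained minimizer $x_0$ is retained when $x_0 \ge m$, and otherwise the constrained minimum is attained at the boundary $x = m$, yielding $\proj_{\sC_{y,m}}(x_0) = \max(x_0, m)$.

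I do not anticipate a genuine obstacle here, as this is a standard projection onto an interval. The only point requiring care is the case split at the boundary, namely verifying that clamping to the endpoint indeed minimizes the distance when the unconstrained optimum $x_0$ is infeasible; this follows immediately from the monotonicity of $(x - x_0)^2$ on either side of $x_0$.
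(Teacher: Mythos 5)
Your proposal is correct and follows essentially the same route as the paper's proof: a case split on the sign of $y$, followed by the observation that the unconstrained minimizer $x_0$ is retained when feasible and otherwise clamped to the nearest endpoint of the half-line. The extra remarks on convexity, uniqueness, and monotonicity are sound but add nothing beyond what the paper's argument already implicitly uses.
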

\begin{proof}
    We seek $x \in \sC_{y, m}$ that minimizes $(x - x_0)^2$.
    Case 1: $y \le 0$. We need $x \in (-\infty, 0]$. If $x_0 \le 0$, then $x_0$ is already in the set, and the minimum distance (zero) is achieved at $x=x_0$. If $x_0 > 0$, the closest point in $(-\infty, 0]$ is $x=0$. Thus, the projection is $\min(x_0, 0)$.
    Case 2: $y > 0$. We need $x \in [m, \infty)$. If $x_0 \ge m$, then $x_0$ is in the set, and the minimum distance is achieved at $x=x_0$. If $x_0 < m$, the closest point in $[m, \infty)$ is $x=m$. Thus, the projection is $\max(x_0, m)$.
    Combining both cases yields the stated formula.
\end{proof}

\begin{theorem}[Proximal operator for cross-entropy loss]
    \label{thm:cross-entropy-prox}
    Let $\ell_{CE}: \R^d \times \R^d \to \R$ be the standard cross-entropy loss function for multi-class classification, defined for logits $\vx \in \R^d$ and a one-hot encoded target label vector $\vy \in \{0, 1\}^d$ (with $\vone^\top \vy = 1$) as
    \begin{equation}
        \ell_{CE}(\vx, \vy) = \log\left(\sum_{j=1}^d e^{x_j}\right) - \langle \vy, \vx \rangle.
    \end{equation}
    The proximal operator of this loss function, scaled by $\lambda > 0$, applied to a point $\vx_0 \in \R^d$ is defined as
    \begin{equation}
        \prox_{\lambda \ell_{CE}(\cdot, \vy)}(\vx_0) = \argmin_{\vx \in \R^d} \left( \lambda \ell_{CE}(\vx, \vy) + \frac{1}{2} \norm{\vx - \vx_0}^2 \right).
    \end{equation}
    The unique minimizer $\vx^* = \prox_{\lambda \ell_{CE}(\cdot, \vy)}(\vx_0)$ is characterized by the condition
    \begin{equation} \label{eq:prox_ce_fixed_point}
        \vx^* = \vx_0 + \lambda (\vy - \operatorname{softmax}(\vx^*)).
    \end{equation}
    This point $\vx^*$ can be found efficiently using iterative methods targeting this fixed-point equation, such as fixed-point iteration or Newton's method applied to the equivalent root-finding problem $\vx - \vx_0 - \lambda(\vy - \operatorname{softmax}(\vx)) = \vzero$. Practically, we found that a fixed-point iteration with 10 iterations provides a stable and efficient solution.
\end{theorem}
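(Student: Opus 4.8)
The plan is to treat this as a standard smooth convex minimization and to characterize its unique solution through the first-order optimality condition. First I would establish that the objective $F(\vx) = \lambda \ell_{CE}(\vx, \vy) + \frac{1}{2}\norm{\vx - \vx_0}^2$ admits a unique minimizer. The log-sum-exp term is convex (its Hessian is $\diag(\vs) - \vs\vs^\top$ with $\vs = \operatorname{softmax}(\vx)$, which is positive semidefinite), the term $-\langle \vy, \vx \rangle$ is affine, and the quadratic penalty is strictly convex; hence $F$ is strictly convex, giving uniqueness. For existence I would note that for a one-hot target $\vy$ the cross-entropy is non-negative, since $\log \sum_j e^{x_j} \ge x_c$ where $c$ is the active label, so $F$ is bounded below and coercive by the quadratic term, guaranteeing that a minimizer exists.

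Next I would compute $\nabla F$. The key identity is that the gradient of log-sum-exp is the softmax, so $\nabla_{\vx}\ell_{CE}(\vx, \vy) = \operatorname{softmax}(\vx) - \vy$, whence $\nabla F(\vx) = \lambda(\operatorname{softmax}(\vx) - \vy) + (\vx - \vx_0)$. Since $F$ is convex and differentiable, $\vx^*$ is the minimizer if and only if $\nabla F(\vx^*) = \vzero$, and this identity rearranges directly into the claimed fixed-point equation $\vx^* = \vx_0 + \lambda(\vy - \operatorname{softmax}(\vx^*))$, completing the characterization.

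There is no genuine obstacle here; the argument is a routine application of convex first-order conditions. The only points requiring a little care are (i) confirming positive semidefiniteness of the log-sum-exp Hessian, which follows from recognizing $\diag(\vs) - \vs\vs^\top$ as the covariance matrix of the distribution $\vs$, and (ii) the existence step, where the quadratic term does the work since the cross-entropy alone is bounded below rather than coercive. The final remark on iterative solvability is an algorithmic observation rather than part of the existence/uniqueness claim: the residual $\vx - \vx_0 - \lambda(\vy - \operatorname{softmax}(\vx))$ is smooth with symmetric positive definite Jacobian $\mI + \lambda\bigl(\diag(\vs) - \vs\vs^\top\bigr)$, so Newton's method is locally well-posed; I would note this without establishing global convergence rates.
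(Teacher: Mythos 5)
Your proposal is correct and follows essentially the same route as the paper's proof: strict convexity of the objective, the first-order condition $\nabla F(\vx^*) = \vzero$, the identity $\nabla_{\vx}\ell_{CE}(\vx,\vy) = \operatorname{softmax}(\vx) - \vy$, and a rearrangement into the fixed-point form. You are in fact slightly more careful than the paper, which infers existence of the minimizer from strict convexity alone, whereas you correctly supply the coercivity argument via the quadratic term.
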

\begin{proof}
    The function $g(\vx) = \lambda \ell_{CE}(\vx, \vy) + \frac{1}{2} \norm{\vx - \vx_0}^2$ is strictly convex because $\ell_{CE}(\vx, \vy)$ is convex and $\frac{1}{2} \norm{\vx - \vx_0}^2$ is strictly convex. Therefore, a unique minimizer $\vx^*$ exists.
    The minimizer is characterized by the first-order optimality condition $\nabla g(\vx^*) = \vzero$.
    The gradient is
    \begin{equation}
        \nabla g(\vx) = \lambda \nabla_{\vx} \ell_{CE}(\vx, \vy) + \nabla_{\vx} \left(\frac{1}{2} \norm{\vx - \vx_0}^2\right).
    \end{equation}
    The gradient of the cross-entropy loss is $\nabla_{\vx} \ell_{CE}(\vx, \vy) = \operatorname{softmax}(\vx) - \vy$.
    The gradient of the quadratic term is $\vx - \vx_0$.
    Setting the total gradient to zero at $\vx^*$ gives
    \begin{equation}
        \lambda (\operatorname{softmax}(\vx^*) - \vy) + (\vx^* - \vx_0) = \vzero.
    \end{equation}
    Rearranging this equation yields the characterization
    \begin{equation}
        \vx^* = \vx_0 - \lambda (\operatorname{softmax}(\vx^*) - \vy) = \vx_0 + \lambda (\vy - \operatorname{softmax}(\vx^*)).
    \end{equation}
    This confirms \cref{eq:prox_ce_fixed_point}.
\end{proof}

\section{DETAILED EXPERIMENTAL RESULTS}
\label{sec:results_tables}
This section provides the detailed numerical results from the experiments described in \cref{sec:experiments}. We compare the performance of the projection-based methods (Douglas-Rachford (DR), Alternating Projections (AP), Cyclic Projections (CP)) against gradient-based baselines (Stochastic Gradient Descent (SGD), Adam) and gradient-free baselines (Feedback Alignment (FA) for MLPs and Forward–Forward (FF) for MLPs and CNNs). The results are presented in the tables below (\cref{tab:mlp_results} for MLPs, \cref{tab:cnn_results} for CNNs, and \cref{tab:rnn_results} for RNNs).

These results are obtained using default train/test splits (approximately 80\% train, 20\% test), with 10\% of training data reserved for validation. Early stopping is applied based on validation accuracy, with a patience of 5 checks (5000 steps) if no improvement is observed. Metrics are averaged over 5 independent runs with different random seeds for initialization and data shuffling; standard deviations ($\pm$) are reported where appropriate.
\begin{itemize}
    \item \textbf{Test Accuracy (\%):} The final classification accuracy achieved by the best model (selected based on validation performance) on the held-out test set. For the Shakespeare dataset, this refers to the accuracy of predicting the next character.
    \item \textbf{Train Accuracy (\%):} The final classification accuracy on the training set, reported for comparison with test accuracy.
    \item \textbf{Steps (k):} The number of training steps (parameter updates), reported in thousands ('k'), required to reach 99\% of the maximum validation accuracy achieved during that run. This metric indicates convergence speed in terms of updates.
    \item \textbf{Time (s):} The wall-clock time in seconds required to reach 99\% of the maximum validation accuracy. This measurement excludes time spent on data loading, validation checks, and initial JIT compilation, focusing on the core training loop execution time.
    \item \textbf{Time/Step (ms):} The average wall-clock time in milliseconds per single training step (parameter update step).
\end{itemize}

\begin{table}[htbp]
    \centering
    \caption{MLP Performance Comparison.}
    \label{tab:mlp_results}
    \resizebox{0.9\textwidth}{!}{ 
    \begin{tabular}{ @{} lllccccc @{} }
        \toprule
        Dataset & Architecture & Method & Test Acc. (\%) & Train Acc. (\%) & Steps (k) & Time (s) & Time/Step (ms) \\
        \midrule
        \multirow{21}{*}{MNIST} & \multirow{8}{*}{1 x 128} & SGD & 96.8 $\pm$ 0.1 & 97.6 $\pm$ 0.1 & 58.6 $\pm$ 7.2 & 171.4 $\pm$ 21.4 & 2.92 $\pm$ 0.01 \\
         & & Adam & 98.1 $\pm$ 0.1 & 100.0 $\pm$ 0.0 & 1.6 $\pm$ 0.5 & 4.7 $\pm$ 1.4 & 2.95 $\pm$ 0.01 \\
         & & FA & 97.6 $\pm$ 0.1 & 99.5 $\pm$ 0.2 & 8.0 $\pm$ 1.1 & 23.3 $\pm$ 3.1 & 2.92 $\pm$ 0.02 \\
         & & FF (SGD) & 12.7 $\pm$ 1.9 & 12.8 $\pm$ 2.2 & 1.6 $\pm$ 1.7 & 5.1 $\pm$ 5.6 & 3.12 $\pm$ 0.08 \\
         & & FF (Adam) & 95.6 $\pm$ 0.5 & 96.2 $\pm$ 0.6 & 82.4 $\pm$ 11.8 & 256.2 $\pm$ 35.2 & 3.12 $\pm$ 0.04 \\
         & & DR & 95.8 $\pm$ 0.6 & 96.9 $\pm$ 0.7 & 16.6 $\pm$ 3.3 & 3.7 $\pm$ 0.7 & 0.22 $\pm$ 0.00 \\
         & & AP & 91.5 $\pm$ 0.7 & 91.4 $\pm$ 0.8 & 25.8 $\pm$ 5.4 & 2.3 $\pm$ 0.5 & 0.09 $\pm$ 0.00 \\
         & & CP & 92.4 $\pm$ 0.6 & 92.5 $\pm$ 0.7 & 41.6 $\pm$ 11.4 & 8.6 $\pm$ 2.3 & 0.21 $\pm$ 0.00 \\
        \cmidrule(lr){2-8}
         & \multirow{8}{*}{4 x 128} & SGD & 96.9 $\pm$ 0.2 & 98.8 $\pm$ 0.2 & 32.4 $\pm$ 2.3 & 96.2 $\pm$ 7.2 & 2.97 $\pm$ 0.03 \\
         & & Adam & 98.1 $\pm$ 0.1 & 99.9 $\pm$ 0.1 & 1.8 $\pm$ 0.7 & 5.7 $\pm$ 2.3 & 3.09 $\pm$ 0.05 \\
         & & FA & 95.4 $\pm$ 0.8 & 96.1 $\pm$ 1.0 & 20.0 $\pm$ 6.2 & 60.2 $\pm$ 18.3 & 3.02 $\pm$ 0.03 \\
         & & FF (SGD) & 11.9 $\pm$ 0.8 & 11.8 $\pm$ 0.7 & 2.8 $\pm$ 1.7 & 9.0 $\pm$ 5.4 & 3.24 $\pm$ 0.05 \\
         & & FF (Adam) & 94.3 $\pm$ 0.4 & 94.2 $\pm$ 0.6 & 66.0 $\pm$ 9.3 & 220.3 $\pm$ 32.1 & 3.35 $\pm$ 0.06 \\
         & & DR & 85.5 $\pm$ 0.6 & 86.2 $\pm$ 0.9 & 29.6 $\pm$ 7.3 & 10.6 $\pm$ 2.6 & 0.36 $\pm$ 0.00 \\
         & & AP & 79.6 $\pm$ 0.5 & 79.2 $\pm$ 0.8 & 46.2 $\pm$ 4.2 & 9.4 $\pm$ 0.8 & 0.20 $\pm$ 0.00 \\
         & & CP & 79.9 $\pm$ 0.6 & 79.3 $\pm$ 0.8 & 47.8 $\pm$ 9.5 & 17.1 $\pm$ 3.5 & 0.36 $\pm$ 0.00 \\
        \cmidrule(lr){2-8}
         & \multirow{5}{*}{4 x 128 + Skip} & SGD & 96.6 $\pm$ 0.3 & 98.1 $\pm$ 0.6 & 28.2 $\pm$ 7.2 & 84.9 $\pm$ 21.9 & 3.01 $\pm$ 0.01 \\
         & & Adam & 98.1 $\pm$ 0.2 & 100.0 $\pm$ 0.1 & 1.4 $\pm$ 0.5 & 4.3 $\pm$ 1.6 & 3.08 $\pm$ 0.01 \\
         & & DR & 94.9 $\pm$ 0.4 & 96.1 $\pm$ 0.7 & 21.8 $\pm$ 6.0 & 9.3 $\pm$ 2.6 & 0.43 $\pm$ 0.00 \\
         & & AP & 90.3 $\pm$ 0.8 & 90.3 $\pm$ 0.8 & 32.6 $\pm$ 9.3 & 6.9 $\pm$ 2.0 & 0.21 $\pm$ 0.00 \\
         & & CP & 91.1 $\pm$ 0.5 & 91.1 $\pm$ 0.5 & 34.2 $\pm$ 3.5 & 18.7 $\pm$ 1.9 & 0.55 $\pm$ 0.00 \\
        \midrule
        \multirow{21}{*}{CIFAR-10} & \multirow{8}{*}{1 x 256} & SGD & 50.5 $\pm$ 0.1 & 67.0 $\pm$ 2.0 & 34.6 $\pm$ 4.8 & 307.1 $\pm$ 42.1 & 8.88 $\pm$ 0.06 \\
         & & Adam & 50.7 $\pm$ 0.8 & 80.9 $\pm$ 7.6 & 2.2 $\pm$ 0.4 & 20.2 $\pm$ 3.6 & 9.14 $\pm$ 0.09 \\
         & & FA & 49.9 $\pm$ 0.9 & 73.4 $\pm$ 2.9 & 5.8 $\pm$ 1.2 & 52.3 $\pm$ 10.6 & 9.08 $\pm$ 0.12 \\
         & & FF (SGD) & 9.9 $\pm$ 0.2 & 10.2 $\pm$ 0.4 & 2.6 $\pm$ 2.7 & 24.5 $\pm$ 25.3 & 9.29 $\pm$ 0.10 \\
         & & FF (Adam) & 45.4 $\pm$ 0.7 & 50.7 $\pm$ 1.3 & 100.0 $\pm$ 8.9 & 919.3 $\pm$ 83.5 & 9.19 $\pm$ 0.03 \\
         & & DR & 42.1 $\pm$ 1.1 & 55.6 $\pm$ 4.8 & 27.2 $\pm$ 10.3 & 62.3 $\pm$ 23.5 & 2.29 $\pm$ 0.00 \\
         & & AP & 38.5 $\pm$ 0.5 & 40.3 $\pm$ 0.5 & 11.0 $\pm$ 3.7 & 4.8 $\pm$ 1.6 & 0.43 $\pm$ 0.00 \\
         & & CP & 38.5 $\pm$ 0.5 & 40.3 $\pm$ 0.5 & 9.8 $\pm$ 3.1 & 14.4 $\pm$ 4.6 & 1.47 $\pm$ 0.00 \\
        \cmidrule(lr){2-8}
         & \multirow{8}{*}{4 x 256} & SGD & 47.8 $\pm$ 0.3 & 67.6 $\pm$ 2.6 & 21.6 $\pm$ 3.1 & 198.3 $\pm$ 27.5 & 9.16 $\pm$ 0.10 \\
         & & Adam & 50.0 $\pm$ 0.2 & 79.3 $\pm$ 10.3 & 2.6 $\pm$ 1.4 & 24.4 $\pm$ 12.4 & 9.37 $\pm$ 0.05 \\
         & & FA & 29.3 $\pm$ 3.6 & 29.5 $\pm$ 3.4 & 9.0 $\pm$ 2.3 & 84.6 $\pm$ 21.1 & 9.34 $\pm$ 0.09 \\
         & & FF (SGD) & 11.0 $\pm$ 0.6 & 11.2 $\pm$ 0.6 & 1.2 $\pm$ 1.0 & 11.7 $\pm$ 9.2 & 9.48 $\pm$ 0.18 \\
         & & FF (Adam) & 43.6 $\pm$ 1.0 & 48.0 $\pm$ 2.5 & 92.6 $\pm$ 16.6 & 870.8 $\pm$ 155.3 & 9.40 $\pm$ 0.03 \\
         & & DR & 32.3 $\pm$ 0.9 & 35.4 $\pm$ 1.1 & 13.2 $\pm$ 5.5 & 39.8 $\pm$ 16.7 & 3.01 $\pm$ 0.00 \\
         & & AP & 32.5 $\pm$ 0.6 & 33.7 $\pm$ 0.9 & 13.6 $\pm$ 7.4 & 9.4 $\pm$ 5.1 & 0.69 $\pm$ 0.00 \\
         & & CP & 32.5 $\pm$ 0.5 & 33.8 $\pm$ 0.8 & 13.6 $\pm$ 7.4 & 27.0 $\pm$ 14.6 & 1.98 $\pm$ 0.00 \\
        \cmidrule(lr){2-8}
         & \multirow{5}{*}{4 x 256 + Skip} & SGD & 49.4 $\pm$ 0.3 & 68.7 $\pm$ 1.7 & 21.4 $\pm$ 2.4 & 200.5 $\pm$ 23.6 & 9.32 $\pm$ 0.08 \\
         & & Adam & 50.6 $\pm$ 0.2 & 89.2 $\pm$ 6.0 & 3.2 $\pm$ 1.0 & 30.0 $\pm$ 8.9 & 9.33 $\pm$ 0.20 \\
         & & DR & 39.2 $\pm$ 0.8 & 45.3 $\pm$ 1.9 & 20.8 $\pm$ 6.5 & 61.2 $\pm$ 19.1 & 2.94 $\pm$ 0.00 \\
         & & AP & 38.9 $\pm$ 0.6 & 41.0 $\pm$ 0.8 & 14.6 $\pm$ 3.9 & 10.2 $\pm$ 2.7 & 0.70 $\pm$ 0.00 \\
         & & CP & 38.9 $\pm$ 0.7 & 41.0 $\pm$ 0.7 & 14.6 $\pm$ 3.9 & 42.4 $\pm$ 11.3 & 2.91 $\pm$ 0.00 \\
        \midrule
        \multirow{21}{*}{Higgs} & \multirow{7}{*}{1 x 256} & SGD & 66.9 $\pm$ 0.3 & 66.9 $\pm$ 0.3 & 63.6 $\pm$ 11.4 & 55.5 $\pm$ 10.3 & 0.88 $\pm$ 0.01 \\
         & & Adam & 74.6 $\pm$ 0.1 & 74.6 $\pm$ 0.1 & 29.0 $\pm$ 3.9 & 27.3 $\pm$ 3.8 & 0.97 $\pm$ 0.01 \\
         & & FA & 71.3 $\pm$ 0.5 & 71.3 $\pm$ 0.5 & 59.0 $\pm$ 18.9 & 52.1 $\pm$ 17.1 & 0.89 $\pm$ 0.01 \\
         & & FF (SGD) & 57.6 $\pm$ 5.8 & 57.7 $\pm$ 5.7 & 152.8 $\pm$ 191.6 & 168.1 $\pm$ 210.9 & 1.07 $\pm$ 0.03 \\
         & & FF (Adam) & 71.1 $\pm$ 0.3 & 71.0 $\pm$ 0.4 & 55.6 $\pm$ 12.6 & 66.0 $\pm$ 15.5 & 1.18 $\pm$ 0.03 \\
         & & DR & 67.1 $\pm$ 0.4 & 67.0 $\pm$ 0.4 & 24.8 $\pm$ 5.1 & 0.9 $\pm$ 0.2 & 0.04 $\pm$ 0.00 \\
         & & AP & 65.4 $\pm$ 0.3 & 65.3 $\pm$ 0.3 & 27.4 $\pm$ 13.5 & 0.6 $\pm$ 0.3 & 0.02 $\pm$ 0.00 \\
         & & CP & 65.2 $\pm$ 0.4 & 65.2 $\pm$ 0.5 & 25.4 $\pm$ 10.8 & 0.7 $\pm$ 0.3 & 0.03 $\pm$ 0.00 \\
        \cmidrule(lr){2-8}
         & \multirow{7}{*}{4 x 256} & SGD & 71.1 $\pm$ 0.6 & 71.1 $\pm$ 0.6 & 178.8 $\pm$ 37.7 & 176.0 $\pm$ 38.7 & 0.98 $\pm$ 0.01 \\
         & & Adam & 76.6 $\pm$ 0.2 & 76.6 $\pm$ 0.2 & 26.0 $\pm$ 4.9 & 26.8 $\pm$ 4.7 & 1.06 $\pm$ 0.02 \\
         & & FA & 62.4 $\pm$ 3.1 & 62.4 $\pm$ 3.2 & 6.6 $\pm$ 2.0 & 6.1 $\pm$ 1.8 & 0.92 $\pm$ 0.02 \\
         & & FF (SGD) & 57.3 $\pm$ 2.3 & 57.3 $\pm$ 2.2 & 35.4 $\pm$ 19.6 & 41.4 $\pm$ 23.2 & 1.16 $\pm$ 0.02 \\
         & & FF (Adam) & 69.4 $\pm$ 0.3 & 69.4 $\pm$ 0.3 & 33.6 $\pm$ 5.1 & 44.3 $\pm$ 7.5 & 1.33 $\pm$ 0.01 \\
         & & DR & 61.1 $\pm$ 0.3 & 61.1 $\pm$ 0.3 & 10.8 $\pm$ 5.8 & 5.4 $\pm$ 2.9 & 0.50 $\pm$ 0.00 \\
         & & AP & 51.7 $\pm$ 5.4 & 51.7 $\pm$ 5.4 & 3.4 $\pm$ 6.8 & 1.0 $\pm$ 2.0 & 0.29 $\pm$ 0.00 \\
         & & CP & 51.7 $\pm$ 5.4 & 51.7 $\pm$ 5.4 & 3.4 $\pm$ 6.8 & 1.9 $\pm$ 3.9 & 0.57 $\pm$ 0.00 \\
        \cmidrule(lr){2-8}
         & \multirow{7}{*}{4 x 256 + Skip} & SGD & 68.9 $\pm$ 0.4 & 68.9 $\pm$ 0.4 & 73.0 $\pm$ 10.3 & 72.4 $\pm$ 10.9 & 0.99 $\pm$ 0.01 \\
         & & Adam & 76.2 $\pm$ 0.1 & 76.2 $\pm$ 0.1 & 22.6 $\pm$ 1.7 & 25.2 $\pm$ 1.9 & 1.15 $\pm$ 0.02 \\
         & & DR & 62.9 $\pm$ 0.6 & 62.9 $\pm$ 0.5 & 10.0 $\pm$ 3.6 & 6.4 $\pm$ 2.3 & 0.64 $\pm$ 0.00 \\
         & & AP & 58.2 $\pm$ 5.7 & 58.3 $\pm$ 5.8 & 13.0 $\pm$ 10.4 & 3.9 $\pm$ 3.1 & 0.30 $\pm$ 0.00 \\
         & & CP & 58.4 $\pm$ 5.8 & 58.4 $\pm$ 5.8 & 14.0 $\pm$ 10.4 & 6.7 $\pm$ 5.0 & 0.48 $\pm$ 0.00 \\
        \bottomrule
    \end{tabular}
    } 
\end{table}

\begin{table}[htbp]
    \centering
    \caption{CNN Performance Comparison.}
    \label{tab:cnn_results}
    \resizebox{\textwidth}{!}{ 
    \begin{tabular}{ @{} lllccccc @{} }
        \toprule
        Dataset & Architecture & Method & Test Acc. (\%) & Train Acc. (\%) & Steps (k) & Time (s) & Time/Step (ms) \\
        \midrule
        \multirow{19}{*}{MNIST} & \multirow{7}{*}{1 x 32} & SGD & 71.7 $\pm$ 1.1 & 70.8 $\pm$ 1.5 & 133.0 $\pm$ 23.9 & 402.8 $\pm$ 72.6 & 3.03 $\pm$ 0.00 \\
         & & Adam & 77.8 $\pm$ 1.2 & 77.1 $\pm$ 1.4 & 26.2 $\pm$ 3.4 & 81.7 $\pm$ 11.1 & 3.11 $\pm$ 0.02 \\
         & & FF (SGD) & 24.5 $\pm$ 4.0 & 24.1 $\pm$ 4.0 & 4.0 $\pm$ 2.8 & 13.3 $\pm$ 9.1 & 3.32 $\pm$ 0.04 \\
         & & FF (Adam) & 70.6 $\pm$ 3.4 & 69.9 $\pm$ 3.1 & 19.6 $\pm$ 6.0 & 65.1 $\pm$ 18.1 & 3.35 $\pm$ 0.11 \\
         & & DR & 51.7 $\pm$ 1.2 & 51.3 $\pm$ 1.5 & 12.4 $\pm$ 5.0 & 31.7 $\pm$ 12.8 & 2.55 $\pm$ 0.00 \\
         & & AP & 49.8 $\pm$ 1.6 & 49.4 $\pm$ 1.8 & 25.8 $\pm$ 2.3 & 72.2 $\pm$ 6.5 & 2.80 $\pm$ 0.00 \\
         & & CP & 49.9 $\pm$ 1.6 & 49.4 $\pm$ 1.8 & 25.6 $\pm$ 2.0 & 72.1 $\pm$ 5.6 & 2.82 $\pm$ 0.00 \\
        \cmidrule(lr){2-8}
         & \multirow{7}{*}{4 x 16} & SGD & 96.0 $\pm$ 0.2 & 96.3 $\pm$ 0.2 & 50.4 $\pm$ 8.0 & 161.0 $\pm$ 25.6 & 3.20 $\pm$ 0.03 \\
         & & Adam & 97.4 $\pm$ 0.3 & 99.0 $\pm$ 0.4 & 5.4 $\pm$ 1.0 & 18.4 $\pm$ 3.5 & 3.41 $\pm$ 0.02 \\
         & & FF (SGD) & 20.0 $\pm$ 3.0 & 19.5 $\pm$ 2.8 & 2.2 $\pm$ 1.9 & 8.1 $\pm$ 7.3 & 3.58 $\pm$ 0.10 \\
         & & FF (Adam) & 88.3 $\pm$ 1.2 & 87.7 $\pm$ 1.1 & 29.0 $\pm$ 6.0 & 108.3 $\pm$ 22.1 & 3.73 $\pm$ 0.03 \\
         & & DR & 55.4 $\pm$ 4.1 & 55.1 $\pm$ 3.8 & 99.4 $\pm$ 51.0 & 7713.7 $\pm$ 3959.3 & 77.60 $\pm$ 0.01 \\
         & & AP & 46.4 $\pm$ 4.8 & 46.0 $\pm$ 4.5 & 20.8 $\pm$ 3.3 & 1358.7 $\pm$ 216.8 & 65.32 $\pm$ 0.04 \\
         & & CP & 46.4 $\pm$ 4.8 & 46.1 $\pm$ 4.5 & 20.6 $\pm$ 3.0 & 1426.7 $\pm$ 208.6 & 69.24 $\pm$ 0.09 \\
        \cmidrule(lr){2-8}
         & \multirow{5}{*}{4 x 16 + Skip} & SGD & 96.7 $\pm$ 0.3 & 97.1 $\pm$ 0.3 & 51.4 $\pm$ 10.2 & 166.7 $\pm$ 33.4 & 3.25 $\pm$ 0.01 \\
         & & Adam & 97.9 $\pm$ 0.2 & 99.1 $\pm$ 0.3 & 4.0 $\pm$ 0.6 & 13.7 $\pm$ 2.2 & 3.40 $\pm$ 0.03 \\
         & & DR & 71.3 $\pm$ 1.6 & 70.8 $\pm$ 1.4 & 16.0 $\pm$ 6.4 & 1303.8 $\pm$ 518.2 & 81.47 $\pm$ 0.03 \\
         & & AP & 69.4 $\pm$ 1.4 & 68.8 $\pm$ 1.4 & 46.2 $\pm$ 15.2 & 3240.5 $\pm$ 1068.3 & 70.14 $\pm$ 0.02 \\
         & & CP & 69.6 $\pm$ 1.3 & 68.9 $\pm$ 1.2 & 44.2 $\pm$ 10.5 & 1609.5 $\pm$ 383.7 & 36.42 $\pm$ 0.03 \\
        \midrule
        \multirow{19}{*}{CIFAR-10} & \multirow{7}{*}{1 x 32} & SGD & 39.6 $\pm$ 1.1 & 40.0 $\pm$ 1.4 & 91.0 $\pm$ 19.2 & 821.3 $\pm$ 172.9 & 9.03 $\pm$ 0.03 \\
         & & Adam & 45.4 $\pm$ 0.6 & 45.9 $\pm$ 0.6 & 22.8 $\pm$ 6.3 & 210.7 $\pm$ 57.8 & 9.23 $\pm$ 0.03 \\
         & & FF (SGD) & 10.1 $\pm$ 0.0 & 10.0 $\pm$ 0.0 & 1.0 $\pm$ 0.0 & 9.7 $\pm$ 0.4 & 9.48 $\pm$ 0.13 \\
         & & FF (Adam) & 26.9 $\pm$ 2.1 & 27.0 $\pm$ 2.5 & 8.8 $\pm$ 4.0 & 84.3 $\pm$ 38.2 & 9.56 $\pm$ 0.05 \\
         & & DR & 27.6 $\pm$ 0.6 & 27.5 $\pm$ 0.9 & 2.4 $\pm$ 1.0 & 27.4 $\pm$ 11.6 & 11.41 $\pm$ 0.00 \\
         & & AP & 29.1 $\pm$ 0.9 & 29.1 $\pm$ 0.8 & 19.0 $\pm$ 5.4 & 110.6 $\pm$ 31.5 & 5.82 $\pm$ 0.00 \\
         & & CP & 29.2 $\pm$ 0.9 & 29.2 $\pm$ 0.9 & 18.8 $\pm$ 5.6 & 109.8 $\pm$ 32.7 & 5.84 $\pm$ 0.00 \\
        \cmidrule(lr){2-8}
         & \multirow{7}{*}{4 x 16} & SGD & 49.2 $\pm$ 1.8 & 51.2 $\pm$ 2.3 & 83.6 $\pm$ 25.9 & 786.5 $\pm$ 245.4 & 9.40 $\pm$ 0.05 \\
         & & Adam & 54.9 $\pm$ 0.8 & 62.8 $\pm$ 1.8 & 12.0 $\pm$ 4.0 & 114.2 $\pm$ 38.8 & 9.49 $\pm$ 0.09 \\
         & & FF (SGD) & 10.0 $\pm$ 0.1 & 10.1 $\pm$ 0.1 & 0.6 $\pm$ 0.5 & 6.0 $\pm$ 4.9 & 9.96 $\pm$ 0.33 \\
         & & FF (Adam) & 28.3 $\pm$ 3.3 & 28.4 $\pm$ 3.2 & 10.4 $\pm$ 6.3 & 102.3 $\pm$ 61.7 & 9.81 $\pm$ 0.09 \\
         & & DR & 20.3 $\pm$ 0.9 & 20.2 $\pm$ 0.6 & 2.8 $\pm$ 2.2 & 301.7 $\pm$ 240.1 & 107.78 $\pm$ 0.07 \\
         & & AP & 22.8 $\pm$ 1.4 & 22.5 $\pm$ 1.2 & 9.2 $\pm$ 5.1 & 827.1 $\pm$ 457.4 & 89.88 $\pm$ 0.17 \\
         & & CP & 22.6 $\pm$ 1.3 & 22.4 $\pm$ 1.2 & 9.0 $\pm$ 4.8 & 847.5 $\pm$ 448.7 & 94.44 $\pm$ 0.14 \\
        \cmidrule(lr){2-8}
         & \multirow{5}{*}{4 x 16 + Skip} & SGD & 54.1 $\pm$ 1.8 & 57.0 $\pm$ 2.1 & 106.8 $\pm$ 22.4 & 1004.6 $\pm$ 223.9 & 9.38 $\pm$ 0.16 \\
         & & Adam & 59.4 $\pm$ 0.6 & 67.6 $\pm$ 0.8 & 13.0 $\pm$ 3.2 & 123.9 $\pm$ 32.1 & 9.49 $\pm$ 0.08 \\
         & & DR & 31.4 $\pm$ 1.2 & 31.8 $\pm$ 1.6 & 3.2 $\pm$ 0.4 & 365.3 $\pm$ 45.5 & 114.13 $\pm$ 0.18 \\
         & & AP & 33.4 $\pm$ 1.0 & 33.5 $\pm$ 1.1 & 24.6 $\pm$ 4.6 & 2352.5 $\pm$ 439.3 & 95.68 $\pm$ 0.06 \\
         & & CP & 33.7 $\pm$ 1.2 & 33.8 $\pm$ 1.2 & 24.8 $\pm$ 4.6 & 1433.4 $\pm$ 266.9 & 57.82 $\pm$ 0.02 \\
        \bottomrule
    \end{tabular}
    } 
\end{table}

\begin{table}[htbp]
    \centering
    \caption{RNN Performance Comparison.}
    \label{tab:rnn_results}
    \resizebox{\textwidth}{!}{ 
    \begin{tabular}{ @{} lllccccc @{} }
        \toprule
        Dataset & Architecture & Method & Test Acc. (\%) & Train Acc. (\%) & Steps (k) & Time (s) & Time/Step (ms) \\
        \midrule
        \multirow{12}{*}{Shakespeare} & \multirow{4}{*}{1 x 256} & SGD & 35.7 $\pm$ 5.1 & 38.8 $\pm$ 6.4 & 179.0 $\pm$ 81.4 & 480.9 $\pm$ 218.7 & 2.69 $\pm$ 0.00 \\
         & & Adam & 49.9 $\pm$ 0.0 & 59.8 $\pm$ 0.4 & 7.4 $\pm$ 1.4 & 20.0 $\pm$ 3.7 & 2.70 $\pm$ 0.00 \\
         & & DR & 32.4 $\pm$ 0.5 & 34.0 $\pm$ 0.7 & 16.6 $\pm$ 5.5 & 849.0 $\pm$ 283.4 & 51.14 $\pm$ 0.02 \\
         & & AP & 29.6 $\pm$ 0.8 & 30.4 $\pm$ 1.0 & 98.8 $\pm$ 56.2 & 1646.0 $\pm$ 935.6 & 16.66 $\pm$ 0.00 \\
        \cmidrule(lr){2-8}
         & \multirow{4}{*}{4 x 128} & SGD & 35.1 $\pm$ 4.1 & 37.1 $\pm$ 5.3 & 63.0 $\pm$ 34.7 & 255.3 $\pm$ 140.8 & 4.05 $\pm$ 0.00 \\
         & & Adam & 49.8 $\pm$ 0.1 & 57.9 $\pm$ 0.3 & 28.6 $\pm$ 8.3 & 118.4 $\pm$ 34.3 & 4.14 $\pm$ 0.00 \\
         & & DR & 26.7 $\pm$ 1.1 & 27.4 $\pm$ 1.2 & 10.4 $\pm$ 6.7 & 234.1 $\pm$ 149.6 & 22.51 $\pm$ 0.00 \\
         & & AP & 26.5 $\pm$ 1.1 & 27.1 $\pm$ 1.2 & 182.4 $\pm$ 84.0 & 1688.6 $\pm$ 777.3 & 9.26 $\pm$ 0.00 \\
        \cmidrule(lr){2-8}
         & \multirow{4}{*}{4 x 128 + Skip} & SGD & 38.9 $\pm$ 4.5 & 42.6 $\pm$ 6.1 & 129.6 $\pm$ 80.2 & 619.7 $\pm$ 383.6 & 4.78 $\pm$ 0.00 \\
         & & Adam & 50.2 $\pm$ 0.3 & 59.0 $\pm$ 0.7 & 15.8 $\pm$ 5.2 & 75.1 $\pm$ 24.7 & 4.75 $\pm$ 0.00 \\
         & & DR & 32.2 $\pm$ 0.8 & 33.9 $\pm$ 0.9 & 32.8 $\pm$ 8.4 & 1210.8 $\pm$ 308.3 & 36.91 $\pm$ 0.00 \\
         & & AP & 30.2 $\pm$ 0.5 & 31.1 $\pm$ 0.7 & 205.2 $\pm$ 78.3 & 2964.4 $\pm$ 1131.2 & 14.44 $\pm$ 0.01 \\
        \bottomrule
    \end{tabular}
    } 
\end{table}

\end{document}